\newtheorem{definition}{Definition}
\newtheorem{lemma}{Lemma}
\newtheorem{Theorem}{Theorem}
\newtheorem{statement}{Statement}
\newtheorem{proposition}{Proposition}
\newtheorem*{proposition*}{Proposition}
\newcommand{\indep}{\rotatebox[origin=c]{90}{$\models$}}
 \newcommand{\ev}{E}
\newcommand{\iid}{\overset{i.i.d.}{\sim}}
\title{A Wild Bootstrap for Degenerate Kernel Tests}
\author{
Kacper Chwialkowski \\
Department of Computer Science\\
University College London\\
London, Gower Street, WC1E 6BT \\
\texttt{kacper.chwialkowski@gmail.com} \\
\And
Dino Sejdinovic \\
Gatsby Computational Neuroscience Unit, UCL \\
17 Queen Square, London WC1N 3AR \\
\texttt{dino.sejdinovic@gmail.com} \\
\AND
Arthur Gretton \\
Gatsby Computational Neuroscience Unit, UCL \\
17 Queen Square, London WC1N 3AR \\
\texttt{arthur.gretton@gmail.com} \\
}
\newcommand{\Hk}{\ensuremath{\mathcal{H}_k}}%
\begin{document}

\maketitle

\begin{abstract}
%Wild Bootstrap For Kernel Methods is awesome.

A wild bootstrap method for nonparametric hypothesis tests based on kernel distribution embeddings is proposed. This
  bootstrap method is used to construct provably consistent tests that apply to random
  processes, for which the naive permutation-based bootstrap
  fails. It applies to a large group of kernel tests
  based on V-statistics, which are degenerate under the null
  hypothesis, and non-degenerate elsewhere. To illustrate this
  approach, we construct a two-sample test, an instantaneous independence
  test and a multiple lag independence test for time series.  In experiments, the wild
  bootstrap gives strong performance on synthetic examples, on audio
  data, and in performance benchmarking for the Gibbs sampler. The code is available at 
  \url{https://github.com/kacperChwialkowski/wildBootstrap}.    

\end{abstract}

\vspace{-4mm}

\section{Introduction}
\vspace{-2.5mm}

Statistical tests based on distribution embeddings into reproducing kernel Hilbert spaces have been applied in many contexts,  including two sample testing \cite{HarBacMou08,gretton2012kernel,SugSuzItoKanetal11}, tests of independence \cite{gretton_kernel_2008,ZhaPetJanSch11,besserve_statistical_2013}, tests of conditional independence  \cite{fukumizu2007kernel,ZhaPetJanSch11}, and tests for higher order (Lancaster) interactions \cite{sejdinovic2013kernel}. %
% AG: if we are looking to save space in the bibliography we can cut the three references below. However it is important to have the references above, since they are more directly related to the theory-minded paper we're writing.
%Although relatively new, these tests have already influenced research beyond machine learning and proved to be useful tools in domains such as genomics \cite{Schweikert2013}, steganography \cite{Solanki2008} and econometrics \cite{zaremba2014measures}.       
For these tests,  consistency is guaranteed if and only if the observations are independent and identically distributed. 
Much real-world data fails to satisfy the i.i.d. assumption: audio signals, EEG recordings, text documents, financial time series, and samples obtained when running Markov Chain Monte Carlo, all show  significant temporal dependence patterns.  

The asymptotic behaviour of kernel test statistics becomes quite different when temporal dependencies exist within
the samples.
In recent work on independence testing using the Hilbert-Schmidt Independence
Criterion (HSIC) \cite{chwialkowski2014kernel}, the asymptotic distribution of the statistic under the null hypothesis is obtained for a pair of independent time series, which satisfy an absolute regularity or a $\phi$-mixing assumption.
In this case, the null distribution is shown to be an infinite weighted sum of {\em dependent} $\chi^2$-variables,
as opposed to the sum of \emph{independent} $\chi^2$-variables obtained in the i.i.d. setting \cite{gretton_kernel_2008}.
The difference in the asymptotic null distributions has important implications in practice:
under the i.i.d. assumption, an empirical estimate of the null distribution can be obtained by
repeatedly permuting the time indices of one of the signals. This breaks
the temporal dependence within the permuted signal, which causes the test to return an elevated
number of false positives, when used for testing time series. To address this problem, an alternative estimate of the null distribution
is proposed in \cite{chwialkowski2014kernel}, where the null distribution is simulated by repeatedly
{\em shifting} one
signal relative to the other. This preserves the temporal structure within each signal, while breaking the cross-signal
dependence.

A serious limitation of the shift procedure in \cite{chwialkowski2014kernel} is that it is specific
to the problem of independence testing: there is no obvious way to generalise it to  other
testing contexts. For instance, we might have two time series, with the goal of comparing
their marginal distributions - this is a generalization of the two-sample setting to which the shift
approach does not apply.

We note, however, that many kernel tests have a test statistic with a particular structure:  the Maximum Mean Discrepancy (MMD), HSIC, and the Lancaster interaction statistic,
each have empirical estimates which can be cast as normalized $V$-statistics,
$\frac{1} {n^{m-1}} \sum_{1\leq i_1,...,i_m \leq n} h(Z_{i_1},...,Z_{i_m})$,
where $Z_{i_1},...,Z_{i_m}$ are samples from a random process at the time points $\{i_1,\ldots,i_m\}$. We show that
a method of external randomization known as the {\em wild bootstrap} may be applied  \cite{leucht_dependent_2013,Shao2010} to simulate from the null distribution.
In brief, the arguments of the above sum are repeatedly multiplied by random, user-defined time series. For a test of level
$\alpha$, the $1-\alpha$ quantile of the empirical distribution obtained using these perturbed statistics serves as the test threshold. This approach has the important advantage over \cite{chwialkowski2014kernel} that it may be applied to {\em all} kernel-based tests for which $V$-statistics are employed, and not just for independence tests.

The main result of this paper is to show that the wild bootstrap procedure yields consistent tests for time series, i.e., tests based on the wild bootstrap  have a Type I error rate (of wrongly rejecting the null hypothesis) approaching the design parameter $\alpha$, and a Type II error (of wrongly accepting the null) approaching zero, as the number of samples increases. We use this result to construct a two-sample test using MMD, and an independence test using HSIC. The latter procedure is applied both to testing for instantaneous independence, and to testing for independence across multiple time lags, for which the earlier shift procedure of \cite{chwialkowski2014kernel} cannot be applied.

%DS: commented as we have the same point in the next paragraph...Out test against long range dependence is demonstrated to have an improved power in comparison to the related test by \cite{besserve_statistical_2013}.

%% AG: I moved the detailed MCMC discussion to the MCMC section

%Further, we construct a test of time series dependence similar to one proposed by \cite{besserve_statistical_2013} showing improvements in terms of detection errors.            

%Simultaneous testing for dependence of two time series across multiple time lags is also of interest, since it may not be a priori known at what lag the dependence occurs \cite{besserve_statistical_2013}. In this case, the shift procedure of \cite{chwialkowski2014kernel} can be problematic, as certain shifts may increase dependence between the time series, and it cannot be known in advance which shifts these will be. By contrast, the wild bootstrap can be used straightforwardly in this circumstance, and outperforms \cite{besserve_statistical_2013} in experiments.

We begin our presentation in Section \ref{sec:background}, with a review of the $\tau$-mixing assumption required of the time series, as well as of   $V$-statistics (of which MMD and HSIC are instances). We also introduce the form taken by the wild bootstrap. In Section \ref{sec:main}, we establish a general consistency result for the wild bootstrap procedure on $V$-statistics, which we apply to MMD and to HSIC in Section \ref{sec:mmd_hsic}. Finally, in Section \ref{sec:Experiments}, we present a number of empirical comparisons: in the two sample case, we test for differences in audio signals with the same underlying pitch, and present a performance diagnostic for the output of a Gibbs sampler (the MCMC M.D.); in the independence case, we test for independence of two time series sharing a common variance (a characteristic of econometric models), and compare against the test of \cite{besserve_statistical_2013} in the case where dependence may occur at multiple, potentially unknown lags. Our tests outperform both the naive approach which neglects the dependence structure within the samples, and the approach of \cite{besserve_statistical_2013}, when testing across multiple lags. Detailed proofs are found in the appendices.

%\vspace{-2mm}
\section{Background}\label{sec:background}
%\vspace{-2mm}
The main results of the paper are based around two concepts: $\tau$-mixing \cite{dedecker2007weak}, which describes the dependence within the time series, and  $V$-statistics \cite{serfling80}, which constitute our test statistics. In this section, we review these topics, and introduce the concept of wild bootstrapped $V$-statistics, which will be the key ingredient in our test construction.
%\vspace{-4mm}
\paragraph{$\tau$-mixing.} The notion of $\tau$-mixing is used to characterise weak dependence. It is a less restrictive alternative to classical mixing coefficients, and is covered in depth in \cite{dedecker2007weak}. Let $\{Z_t,\mathcal{F}_t\}_{t \in \mathbb{N}}$  be a stationary sequence of integrable random variables, defined on a probability space $\Omega$ with a probability measure $P$ and a natural filtration $\mathcal{F}_t$. The process  is called $\tau$-dependent if 
\begin{align*}
\tau(r) &= \sup_{l \in \mathbb{N}} \frac 1 l \sup_{ r \leq i_1 \leq ... \leq i_l} \tau( \mathcal F_0,(Z_{i_1},...,Z_{i_l}) )  \overset{r \to \infty}{\longrightarrow} 0,\;\text{where} \\
\tau(\mathcal{M},X) &=  \ev \left( \sup_{g \in \Lambda} \left| \int g(t) P_{X|\mathcal{M}}(dt) - \int g(t) P_X(dt) \right| \right)
\end{align*}
and $\Lambda$ is the set of all one-Lipschitz continuous real-valued functions on the domain of $X$. $\tau(\mathcal M,X)$ can be interpreted as the minimal $L_1$ distance between $X$ and $X^*$ such that $X \overset{d}{=}X^*$ and $X^*$ is independent of $\mathcal M \subset \mathcal F$. Furthermore, if $\mathcal F$ is rich enough, this $X^*$ can be constructed.

Note that this mixing definition differs from commonly used notion of $\beta$ mixing (or $\phi$ mixing),  which was required in the previous work \cite{chwialkowski2014kernel}. We describe in more detail how these notions of dependence are related in Appendix \ref{append:differentMixing}.
%  
%\vspace{-4mm}
\paragraph{$V$-statistics.} The test statistics considered in this paper are always $V$-statistics. Given the observations $Z=\left\{Z_t\right\}_{t=1}^n$, a $V$-statistic of a symmetric function $h$ taking $m$ arguments is given by 
\begin{equation}
\label{def:Vstat}
V(h,Z) = \frac{1}{n^m} \sum_{i \in N^m} \nolimits h(Z_{i_1},...,Z_{i_m}),
\end{equation}
where $N^m$ is a Cartesian power of a set $N= \{1,...,n\}$. For simplicity, we will often drop the second argument and write simply $V(h)$. 
%DS: commented as it does not appear anywhere in the main text...We will  denote the tuple $(i_1,...,i_m)$ by $i$.
%i.e. $\sum_{i \in N^m} f(\cdot) \equiv \sum_{(i_1,...,i_m) \in N^m} f()$. 

We will refer to the function $h$ as to the \emph{core} of the $V$-statistic $V(h)$. While such functions are usually called kernels in the literature, in this paper we reserve the term kernel for positive-definite functions taking two arguments. A core $h$ is said to be $j$-degenerate if for each $z_1,\ldots,z_j$ $\ev h(z_1,\ldots , z_j , Z_{j+1}^*,\ldots ,Z_m^*) = 0,$ where $Z_{j+1}^*,\ldots,Z_m^*$ are independent copies of $Z_1$. If $h$ is $j$-degenerate for all $j\leq m-1$, we will say that it is \emph{canonical}. For a one-degenerate core $h$, we define an auxiliary function $h_2$, called the second component of the core, and given by $h_2(z_1,z_2) = \ev h(z_1,z_2, Z_3^*,\ldots, Z_m^*).$ Finally we say that $nV(h)$ is a normalized $V$-statistic, and that a $V$-statistic with a one-degenerate core is a degenerate $V$-statistic.  This degeneracy is common to many kernel statistics when the null hypothesis holds \cite{gretton2012kernel,gretton_kernel_2008,sejdinovic2013kernel}.

Our main results will rely on the fact that $h_2$ governs the asymptotic behaviour of normalized degenerate $V$-statistics. Unfortunately, the limiting distribution of such $V$-statistics is quite complicated - it is an infinite sum of \emph{dependent} $\chi^2$-distributed random variables, with a dependence  determined by the temporal dependence structure within the process $\{Z_t\}$ and by the eigenfunctions of a certain integral operator associated with $h_2$ \cite{i._s._borisov_orthogonal_2009,chwialkowski2014kernel}. Therefore, we propose a bootstrapped version of the $V$-statistics which will allow a consistent approximation of this difficult limiting distribution.  

%\vspace{-4mm}
\paragraph{Bootstrapped $V$-statistic.} 
We will study two versions of the bootstrapped $V$-statistics  
\begin{align}
 B_{1,n}(h,Z) = \frac{1}{n^m} \sum_{i \in N^m} \nolimits W_{i_1,n} W_{i_2,n} h(Z_{i_1},...,Z_{i_m}), \label{Vb1}\\ 
 B_{1,n}(h,Z) = \frac{1}{n^m} \sum_{i \in N^m}  \nolimits \tilde W_{i_1,n}  \tilde W_{i_2,n} h(Z_{i_1},...,Z_{i_m}),\label{Vb2}
\end{align}
where $\{W_{t,n}\}_{1 \leq t \leq n }$ is an auxiliary wild bootstrap process and $\tilde W_{t,n} = W_{t,n} - \frac 1 n \sum_{j=1}^n W_{j,n}$. This auxiliary process, proposed by \cite{Shao2010,leucht_dependent_2013}, satisfies the following assumption:

\emph{Bootstrap assumption:} $\{W_{t,n}\}_{1 \leq t \leq n }$ is a row-wise strictly stationary triangular array independent of all $Z_t$ such that $\ev W_{t,n}=0$ and $\sup_{n} \ev|W_{t,n}^{2+\sigma}| < \infty$ for some $\sigma > 0$. The autocovariance of the process is given by $\ev W_{s,n} W_{t,n}=\rho(|s-t|/l_n)$ for some function $\rho$, such that $\lim_{u \to 0} \rho(u) = 1$ and $\sum_{r=1}^{n-1} \rho(|r|/l_n)= O(l_n)$. The sequence $\left\{l_n\right\}$ is taken such that $l_n=o(n)$ but $\lim_{n \to \infty} l_n = \infty$. The variables $W_{t,n}$  are $\tau$-weakly dependent with coefficients $\tau(r) \leq C \zeta^{\frac{r} {l_n}}$ for $r=1,...,n$, $\zeta \in (0,1)$ and $C\in\mathbb R$.

As noted in in \cite[Remark 2]{leucht_dependent_2013}, a simple realization of a process that satisfies this assumption is $W_{t,n} = e^{-1/l_n}W_{t-1,n} + \sqrt{1 -e^{-2/l_n}} \epsilon_t$
where $W_{0,n}$ and $\epsilon_1,\ldots,\epsilon_n$ are independent standard normal random variables. For simplicity, we will drop the index $n$ and write $W_t$ instead of $W_{t,n}$. A process that fulfils the \emph{bootstrap assumption} will be called  \emph{bootstrap process}. Further discussion of the wild bootstrap is provided in the Appendix  \ref{wildintro}.
% AG: removed paragraph break to save space
The versions of the bootstrapped $V$-statistics in \eqref{Vb1} and \eqref{Vb2} were previously studied in \cite{leucht_dependent_2013} for the case of canonical cores of degree $m=2$. We extend their results to higher degree cores (common within the kernel testing framework), which are not necessarily one-degenerate. When stating a fact that applies to both $B_1$ and $B_2$, we will simply write $B$, and the argument $h$ or index $n$ will be dropped when there is no ambiguity. 
%\vspace{-2.5mm}
\section{Asymptotics of wild bootstrapped $V$-statistics}\label{sec:main}
%\vspace{-2.5mm}
In this section, we present main Theorems that describe asymptotic behaviour of $V$-statistics, all the poofs are in the Appendix \ref{sec:Wildproofs}.  In the next section, these results will be used to construct kernel-based statistical tests applicable to dependent observations. Tests are constructed so that the  $V$-statistic is degenerate under the null hypothesis and non-degenerate under the alternative. Theorem \ref{th:mainOne} guarantees that the bootstrapped $V$-statistic will converge to the same limiting null distribution as the simple $V$-statistic. 

Throughout this paper we will make one mild assumption
\[
  \sup_{i \in N^m}\ev h(Z_i)^2 < \infty,
 \]
 where $Z_i = (Z_{i_1},\cdots Z_{i_m})$. This assumption is almost always automatically satisfied, since most of the kernels used in practice are bounded.
\begin{Theorem}
\label{th:mainOne}
Assume that the stationary process $Z_t$ is $\tau$-dependent with $\sum_{r=1}^\infty r^2 \sqrt{\tau(r)} < \infty$. If the core $h$ is a Lipschitz continuous, one-degenerate and its $h_2$-component is a positive definite kernel, such that $\ev h_2(Z_0,Z_0) < \infty$, then $nB_n$ \eqref{Vb1}, \eqref{Vb2},  and $n V_n$  \eqref{def:Vstat} converge weakly to the same distribution $V$.  Moreover $nB_n(h_2)$ and $nV_n(h_2)$ converge weakly to $\binom {m} {2} ^{-1} V$.
\end{Theorem}

On the other hand, if the $V$-statistic is not degenerate, which is usually true under the alternative, it converges to some non-zero constant. 
\begin{Theorem}
\label{th:mainThree}
Assume that the stationary process $Z_t$ is $\tau$-dependent with $\tau(r) = o(r^{-4})$. If the core $h$ is a Lipschitz continuous, and  $h_0$ component is positive then  $V_n$ converges in mean squared to $h_0$.
\end{Theorem}
In this setting, Theorem \ref{th:mainTwo} guarantees that the bootstrapped $V$-statistic will converge to zero in probability. This property is necessary in testing, as it implies that the test thresholds computed using the bootstrapped $V$-statistics will also converge to zero, and so will the corresponding Type II error.   
\begin{Theorem}
\label{th:mainTwo}
Assume that the stationary process $\{Z_t\}$ is $\tau$-dependent with a coefficient $\tau(r) = o(r^{-4})$. If the core $h$ is  a function of $m>1$ arguments then $B_1(h)$ and $o(n) B_2(h)$  converge to zero in mean squared. 
\end{Theorem}
Although both $B_2$ and $B_1$  converge to zero, the rate does not seem to be that same. As a consequence, tests that utilize $B_2$ usually give lower Type II error then the ones that use $B_1$. On the other hand, $B_1$ seems to better approximate $V$-statistic distribution under the null hypothesis. This agrees with our experiments in Section \ref{sec:Experiments} as well as with those in \cite[Section 5]{leucht_dependent_2013}).  
These results a sufficient for adopting kernel tests developed for i.i.d. data to tests that work on random processes. In particular Theorem \ref{th:mainOne}  justifies usage of bootstraped $V$-statistics for estimating quantiles of the null distribution, while Theorems \ref{th:mainThree}\ref{th:mainTwo} guarantee consistency.

The general testing procedure is 

\begin{itemize}
\item Calculate the test statistic $n V_{n}(h)$.

\item Obtain wild bootstrap samples $\{B_{n}(h)\}_{i=1}^{D}$
and estimate the $1-\alpha$ empirical quantile of these samples. 
\item If $n V_{n}(h)$ exceeds the quantile, reject.
\end{itemize}

\section{Applications to Kernel Tests}\label{sec:mmd_hsic}
%\vspace{-2.5mm}
In this section, we describe how the wild bootstrap for $V$-statistics can be used to construct kernel tests for independence and the two-sample problem, which are applicable to weakly dependent observations. We start by reviewing the main concepts underpinning the kernel testing framework.

For every symmetric, positive definite function, i.e., \emph{kernel} $k:\mathcal{X}\times\mathcal{X}\to\mathbb{R}$,
there is an associated reproducing kernel Hilbert space $\mathcal{H}_{k}$ \cite[p. 19]{BerTho04}.  The kernel embedding of a probability measure
$P$ on $\mathcal{X}$ is an element $\mu_{k}(P)\in\mathcal{H}_{k}$,
given by $\mu_{k}(P)=\int k(\cdot,x)\, dP(x)$ \cite{BerTho04,SmoGreSonSch07}.
If a measurable kernel $k$ is bounded, the mean embedding $\mu_{k}(P)$
exists for all probability measures on $\mathcal{X}$, and for many interesting
bounded kernels $k$, including the Gaussian, Laplacian and inverse
multi-quadratics, the kernel embedding $P\mapsto\mu_{k}(P)$ is injective.
Such kernels are said to be \emph{characteristic} \cite{SriGreFukLanetal10}.
The RKHS-distance $\left\Vert \mu_k(P_x)-\mu_k(P_y)\right\Vert_{{\mathcal H}_k}^2$ between embeddings of two probability measures $P_x$ and $P_y$
is termed the Maximum Mean Discrepancy (MMD), and its empirical version serves as a popular statistic for non-parametric two-sample testing \cite{gretton2012kernel}.
Similarly, given a sample of paired observations $\{(X_i,Y_i)\}_{i=1}^n\sim P_{xy}$, and kernels $k$ and $l$ respectively on $X$ and $Y$ domains, the RKHS-distance 
$\left\Vert \mu_\kappa(P_{xy})-\mu_\kappa(P_x P_y)\right\Vert_{{\mathcal H}_{\kappa}}^2$ between embeddings of the joint distribution and of the product of the marginals, measures dependence between $X$ and $Y$. Here, $\kappa((x,y),(x',y'))=k(x,x')l(y,y')$ is the kernel on the product space of $X$ and $Y$ domains.
This quantity is called Hilbert-Schmidt Independence Criterion (HSIC) \cite{gretton_measuring_2005,gretton_kernel_2008}. When characteristic RKHSs are used, the HSIC is zero iff $X \indep Y$: this follows from \cite{Asimplercondition}.
The  empirical statistic is written $\widehat{\text{HSIC}}_{\kappa} = \frac{1}{n^2}\text{Tr}(KHLH)$ for kernel matrices $K$ and $L$ and the centering matrix $H=I-\frac{1}{n}\mathbf{1}\mathbf{1}^\top$.

In this section, we describe how the wild bootstrap for $V$-statistics can be used to construct kernel tests for independence and the two-sample problem, in presence of weakly dependent observations.

%\vspace{-2mm}
\subsection{Wild Bootstrap For MMD}
%\vspace{-2mm}
Denote the observations by $\{X_i\}_{i=1}^{n_x}\sim P_x$, and $\{Y_j\}_{j=1}^{n_y}\sim P_y$. Our goal is to test the null hypothesis $\mathbf H_0: P_x=P_y$ vs. 
the alternative $\mathbf H_1: P_x\neq P_y$. In the case where samples have equal sizes, i.e., $n_x=n_y$, application of the wild bootstrap to MMD-based tests on dependent samples is straightforward: the empirical MMD can be written as a $V$-statistic with the core of degree two on pairs $z_i=(x_i,y_i)$ given by $h(z_1,z_2) = k(x_1,x_2)- k(x_1,y_2) - k(x_2,y_1) + k(y_1,y_2)$. It is clear that whenever $k$ is Lipschitz continuous and bounded, so is $h$. Moreover, $h$ is a valid positive definite kernel, since it can be represented as an RKHS inner product  $\left\langle k(\cdot, x_1) -k(\cdot, y_1),k(\cdot, x_2) -k(\cdot, y_2) \right\rangle_{\Hk}$. Under the null hypothesis, $h$ is also one-degenerate, i.e., $\ev h\left((x_1,y_1),(X_2,Y_2)\right) = 0$. Therefore, we can use the bootstrapped statistics in \eqref{Vb1} and \eqref{Vb2} to approximate the null distribution and attain a desired test level.

When $n_x\neq n_y$, however, it is no longer possible to write the empirical MMD
%\begin{align*}
%\widehat{\text{MMD}}_k=\frac{1}{n_x^2}\sum_{i=1}^{n_x}\sum_{j=1}^{n_x}k(x_i,x_j)-\frac{1}{n_x^2}\sum_{i=1}^{n_y}\sum_{j=1}^{n_y}k(y_i,y_j)-\frac{2}{n_x n_y}\sum_{i=1}^{n_x}\sum_{j=1}^{n_y}k(x_i,y_j) 
%\end{align*}
as a one-sample $V$-statistic. We will therefore require the following bootstrapped version of MMD
\begin{align}
\widehat{\text{MMD}}_{k,b}&=\frac{1}{n_x^2}\sum_{i=1}^{n_x}\sum_{j=1}^{n_x}\tilde W_i^{(x)}\tilde W_j^{(x)}k(x_i,x_j)-\frac{1}{n_x^2}\sum_{i=1}^{n_y}\sum_{j=1}^{n_y}\tilde W_i^{(y)}\tilde W_j^{(y)}k(y_i,y_j)\notag\\
{}&\qquad-\frac{2}{n_x n_y}\sum_{i=1}^{n_x}\sum_{j=1}^{n_y}\tilde W_i^{(x)}\tilde W_j^{(y)}k(x_i,y_j),\label{eq:mmdkb}
\end{align}
where $\tilde W_t^{(x)}=W_t^{(x)}-\frac{1}{n_x}\sum_{i=1}^{n_x}W_i^{(x)}$, $\tilde W_t^{(y)}=W_t^{(y)}-\frac{1}{n_y}\sum_{j=1}^{n_y}W_j^{(y)}$;  $\{W_t^{(x)}\}$ and $\{W_t^{(y)}\}$ are two auxiliary wild bootstrap processes that are independent of $\left\{ X_t \right\}$ and $\left\{ Y_t \right\}$ and also independent of each other, both satisfying the bootstrap assumption in Section \ref{sec:background}.  
The following Proposition shows that the bootstrapped statistic has the same asymptotic null distribution as the empirical MMD. The proof follows that of \cite[Theorem 3.1]{leucht_dependent_2013}, and is given in the Appendix \ref{sub:prop:mmd}.

\begin{proposition}\label{prop:mmd}
 Let $k$ be bounded and Lipschitz continuous, and let $\left\{ X_t \right\}$ and $\left\{ Y_t \right\}$ 
 both be $\tau$-dependent with coefficients $\tau(r) =  O(r^{-6-\epsilon})$, but independent of each other. Further, let $n_x=\rho_x n$ and $n_y=\rho_y n$ where $n=n_x+n_y$. Then, under the null hypothesis $P_x=P_y$, $\varphi\left(\rho_x \rho_y n\widehat{\text{MMD}}_k, \rho_x \rho_y n\widehat{\text{MMD}}_{k,b}\right)\to 0$ in probability as $n\to\infty$, where $\varphi$ is the Prokhorov metric and $\widehat{\text{MMD}}_k$ is the MMD between empirical measures.
\end{proposition}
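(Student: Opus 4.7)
The plan is to mirror the single-sample argument of \cite[Theorem 3.1]{leucht_dependent_2013} in a two-sample setting, exploiting the mutual independence of $\{X_t\}$, $\{Y_t\}$, and their two wild-bootstrap weight sequences.

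First I would reduce to a degenerate kernel. Under $\mathbf{H}_0$, write $P = P_x = P_y$ and define
\[
\tilde k(x,y) = k(x,y) - \int k(x,z)\,dP(z) - \int k(z,y)\,dP(z) + \iint k(z,z')\,dP(z)\,dP(z'),
\]
which is bounded, Lipschitz, positive definite, and one-degenerate with respect to $P$. Under $P_x = P_y$ the correction terms cancel across the three summands of $\widehat{\text{MMD}}_k$, so the empirical MMD is unchanged when $k$ is replaced by $\tilde k$. For the bootstrap version, the identities $\sum_i \tilde W_i^{(x)} = 0$ and $\sum_j \tilde W_j^{(y)} = 0$ annihilate the corrections term by term, so $\widehat{\text{MMD}}_{k,b}$ also coincides with its $\tilde k$-analogue.

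Second, I would diagonalise via a Mercer expansion in $L^2(P)$, $\tilde k(x,y) = \sum_l \lambda_l \phi_l(x)\phi_l(y)$, with $\int \phi_l\,dP = 0$ and $\sum_l \lambda_l = \int \tilde k(x,x)\,dP(x) < \infty$. Writing $S_l^X = n_x^{-1/2}\sum_i \phi_l(X_i)$ and $S_l^Y = n_y^{-1/2}\sum_j \phi_l(Y_j)$, and analogously $T_l^X$, $T_l^Y$ with $\tilde W_i^{(x)}\phi_l(X_i)$ and $\tilde W_j^{(y)}\phi_l(Y_j)$ in place of $\phi_l(X_i)$ and $\phi_l(Y_j)$, a direct computation using $n_x = \rho_x n$, $n_y = \rho_y n$ gives
\[
\rho_x\rho_y n\,\widehat{\text{MMD}}_k = \sum_l \lambda_l \bigl(\sqrt{\rho_y}\,S_l^X - \sqrt{\rho_x}\,S_l^Y\bigr)^2, \quad \rho_x\rho_y n\,\widehat{\text{MMD}}_{k,b} = \sum_l \lambda_l \bigl(\sqrt{\rho_y}\,T_l^X - \sqrt{\rho_x}\,T_l^Y\bigr)^2.
\]

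Third, for each fixed $l$ I would apply a $\tau$-mixing CLT to get joint convergence of $(S_l^X, S_l^Y)$ to a pair of independent Gaussians whose variances are the long-run variances of $\phi_l(X_t)$ and $\phi_l(Y_t)$, and invoke the conditional bootstrap CLT underlying \cite[Theorem 3.1]{leucht_dependent_2013}, applied separately to the two independent processes with their independent weights, to get $(T_l^X, T_l^Y)$ converging in probability to the same limit. Matching of limiting variances comes from the bootstrap assumption $\ev W_{s,n}W_{t,n} = \rho(|s-t|/l_n)$ with $\sum_r \rho(|r|/l_n) = O(l_n)$ and $l_n = o(n)$. Finite-dimensional joint convergence across $l \leq L$ then follows by the Cram\'er--Wold device, and independence of $\{X_t\}$ and $\{Y_t\}$ (and of the two weight sequences) gives the product structure of the joint limits.

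The main obstacle I expect is the uniform tail-truncation that lets us pass from finite to infinite spectral sums. One must show that $\sum_{l>L}\lambda_l(S_l^X)^2$ and its counterparts using $S_l^Y$, $T_l^X$, $T_l^Y$ are uniformly (in $n$) small in probability as $L \to \infty$. For the empirical side this follows from $\sum_l \lambda_l < \infty$ and $\ev(S_l^X)^2$ bounded uniformly in $l$, a standard consequence of $\tau(r) = O(r^{-6-\epsilon})$; for the bootstrap side, the decay $\ev \tilde W_i^{(x)}\tilde W_j^{(x)} \approx \rho(|i-j|/l_n)$ combined with $\sum_r \rho(|r|/l_n) = O(l_n)$ and $l_n = o(n)$ is needed to bound $\ev(T_l^X)^2$ uniformly in $l$, and balancing the bootstrap bandwidth against the mixing rate is the delicate part. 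A converging-together argument in the Prokhorov metric then yields the claim.
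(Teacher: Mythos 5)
Your proposal follows essentially the same route as the paper's own proof: centre the kernel to $\tilde k$ (unchanged MMD by degeneracy/centring, unchanged bootstrap statistic because $\sum_i \tilde W_i^{(x)}=\sum_j \tilde W_j^{(y)}=0$), Mercer-expand, and apply the wild-bootstrap convergence of \cite[Theorem 3.1]{leucht_dependent_2013} to each process separately, using the mutual independence of $\{X_t\}$ and $\{Y_t\}$ and the continuous mapping theorem to handle the cross term. The only difference is that you spell out the finite-dimensional CLT, Cram\'er--Wold, and tail-truncation steps that the paper delegates wholesale to the cited theorem, so the argument is correct and not genuinely distinct.
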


\subsection{Wild Bootstrap For HSIC}\label{sec:hsic}
Using HSIC in the context of random processes is not new in the machine learning literature. For a 1-approximating functional of an absolutely regular process \cite{borovkova2001limit}, convergence in probability of the empirical HSIC to its population value was shown in \cite{smola_kernel_2008}. No asymptotic distributions were obtained, however, nor was a statistical test constructed.  The asymptotics of a normalized $V$-statistic were obtained in \cite{chwialkowski2014kernel}  for absolutely regular and $\phi$-mixing processes \cite{doukhan1994mixing}. Due to the intractability of the null distribution for the test statistic, the authors propose a procedure to approximate its null distribution using circular shifts of the observations leading to tests of instantaneous independence, i.e., of $X_t \indep Y_t$, $\forall t$. This was shown to be consistent under the null (i.e., leading to the correct Type I error), however consistency of the shift procedure under the alternative is a challenging open question (see \cite[Section A.2]{chwialkowski2014kernel} for further discussion).
 In contrast, the wild bootstrap guarantees test consistency under both hypotheses: null and alternative, which is a major advantage. 
In addition,  the wild bootstrap can be used in constructing a test for the harder problem of determining independence across multiple lags simultaneously, similar to the one in \cite{besserve_statistical_2013}.

Following symmetrisation, it is shown in \cite{gretton_kernel_2008,chwialkowski2014kernel} that the empirical HSIC can be written as a degree four $V$-statistic with core given by
\begin{align*}
h(&z_1,z_2,z_3,z_4) = \frac{1}{4!} \sum_{\pi \in S_4}  k(x_{\pi(1)},x_{\pi(2)}) [  l(y_{\pi(1)},y_{\pi(2)}) +  l(y_{\pi(3)},y_{\pi(4)}) - 2  l(y_{\pi(2)},y_{\pi(3)})],  
\end{align*}
where we denote by $S_n$ the group of permutations over $n$ elements. 
One-degeneracy of the core under the null hypothesis was stated in \cite[Theorem 2]{gretton_kernel_2008}, \cite[Section A.2, following eq. (11)]{gretton_kernel_2008} shows that $h_2$ is a kernel; $h_0\geq 0$ follows from the fact that HSIC is a distance. Using Theorems \ref{th:mainOne},\ref{th:mainTwo},\ref{th:mainThree} we can construct an independence test using $h$. Drawback of this test, when implemented in the most straightforward way,  is its quadruple computational complexity. To achieve quadratic time complexity, that matches time complexity of HSIC test for i.i.d. data, we modify our bootstrapped statistic.

\paragraph{Quadratic time HSIC.}
In this section we assume that kernels $k,l$ are positive and bounded. We define empirical mean embedding $\tilde \mu_X(x) = \frac 1 n \sum_{i}^n k(x,X_i) $ and centred kernels
\begin{equation*}
\begin{split}
\bar{k}(x,x') =& k(x',x) - \ev k(x,X) -\ev k(X',x') + \ev k(X,X')\\
=&\langle k(x,\cdot ) -\mu_X ,k(x',\cdot)- \mu_X\rangle. \\
 \tilde k(x,x') =& k(x,x') - \frac 1 n \sum_{i}^n k(x,X_i) - \frac 1 n \sum_{i}^n k(x',X_i) +   \frac {1} {n^2} \sum_{i,j}^n k(X_j,X_i)\\
=&\langle k(x,\cdot ) - \tilde\mu_X ,k(x',\cdot)- \tilde \mu_X\rangle. \\
\end{split}
\end{equation*}
where $X,X'$ are i.i.d. copies of $X_1$. Same definitions hold for the kernel $l$. Let $Q_i$ denote  $W_i$ or  $\tilde W_i$ (where  it is necessary, we check claims for both $W_i$ and  $\tilde W_i$ separately). We further define 
\begin{align}
\label{hsic_2}
S_n &= \frac {1} {\sqrt n}\sum_{i \in N} Q_i ( \phi(X_i) - \tilde \mu_X ) \otimes (\phi(Y_i )  - \tilde \mu_Y), \\
T_n &= \frac {1} {\sqrt n}  \sum_{i \in N} Q_i ( \phi(X_i) -  \mu_X ) \otimes(\phi(Y_i )  -  \mu_Y). 
\end{align}
First, we  relate $T_n$ to $B(h_2)$. 
\begin{statement}{ \cite[section A.2, following eq. (11)]{gretton_kernel_2008}}
\label{stm:thanks}
The second component of $h$ is $h_2(z_1,z_2) =  \frac 1 6 \bar{k}(x_1,x_2) \bar{l}(y_1,y_2).$ 
\end{statement}
\begin{lemma}
\label{lem:T_n}
 Squared norm of $T_n$  is equal to $ 6 B(h_2) $.
\end{lemma}
\begin{proof}
  \begin{align*}
 \| T_n \|^2 =  & \frac 1 n  \sum_{i,j \in N} Q_i Q_j \bigg\langle   ( \phi(X_i) -  \mu_X ) \otimes  (\phi(Y_i )  -  \mu_Y),   ( \phi(X_j) -  \mu_X ) \otimes (\phi(Y_j )  -  \mu_Y) \bigg\rangle  \\
  =& \frac 1 n \sum_{i,j \in N} Q_i Q_j \bar k(X_i,X_j)  \bar l(Y_i,Y_j)  \\
  =&6 B(h_2).
 \end{align*}
\end{proof}
Next we relate $S_n$ to $T_n$ -- we  show that the difference between them is asymptotically negligible. We start with a technical lemma.  
\begin{lemma}
\label{lem:hahaha}
If $(\bar k \times \bar k,Z_i)$  is of type $\varDelta$   of order $O(r^{-4})$ (see Definition \ref{def:varDelta}), then 
$$\lim_{n \to \infty} \ev \left \| \sqrt n (\tilde  \mu_X  - \mu_X) \right \|^4 = O(1).$$
\end{lemma}
\begin{proof}
 \begin{align*}
   \ev \left \| \sqrt n (\tilde  \mu_X  -  \mu_X) \right \|^4  &=  \ev \left \|  \frac {1} {\sqrt n}  \sum_{i \in N} \phi(X_i) - \mu_X  \right\|^4  \\
     &=\ev  \left( \frac 1 n   \sum_{i \in N} \langle  \phi(X_j) - \mu_X, \phi(X_i) - \mu_X \rangle  \right)^2 \\
   &=\frac{1}{n^2} \ev  \sum_{i \in N^4} \bar k \times \bar k (Z_i). 
 \end{align*}
 Since $(\bar k \times \bar k, X_i)$  is of type $\varDelta$, by Lemma \ref{lem:higherVstats}, the expected value is of order $O(1)$.
\end{proof}

\begin{lemma}
\label{lem:difS_nT_n}
 If $(\bar k \times \bar k,Z_i)$, $(\bar l \times \bar l,Z_i)$  are of type $\varDelta$ of order $O(r^{-4})$, then, under the null,  $\|S_n\|^2- \|T_n\|^2$  converges to zero in mean square. Under the alternative $\frac 1 n (\|S_n\|^2- \|T_n\|^2) $ converges to zero in  mean square.
\end{lemma}
\begin{proof}
%wild bootstrap validate ?
% p.44 ?
We first show that $\ev \| S_n- T_n\|^2 \to 0$ both under the null and the alternative. Then,  using the fact that $\|T_n\|^2< \infty$ under the null  and $\frac 1 n \|T_n\|^2< \infty $ under alternative we  will conclude the proof. The difference $S_n- T_n$ is 
\begin{align*}
% \label{eq:diferencesAgain}
 \frac {1} { \sqrt n} &\sum_{i \in N}Q_i \bigg[ (\phi(X_i) - \tilde \mu_X ) \otimes (\phi(Y_i )  - \tilde \mu_Y) - ( \phi(X_i) -  \mu_X ) \otimes(\phi(Y_i )  -  \mu_Y)\bigg] \\
&= \frac {1} { \sqrt n} \sum_{i \in N} Q_i \bigg[    \phi(X_i)  \otimes   \mu_Y -\phi(X_i) \otimes \tilde \mu_Y \bigg]  \\
&+\frac {1} { \sqrt n} \sum_{i \in N} Q_i \bigg[    \phi(Y_i)  \otimes   \mu_X -\phi(Y_i) \otimes \tilde \mu_X \bigg] \\
&+\frac {1} { \sqrt n} \sum_{i \in N}Q_i  ( \tilde  \mu_X   \otimes \tilde \mu_Y - \mu_Y \otimes \mu_X).
\end{align*}
We  examine differences separately -- it is sufficient to show that each difference converges to zero in mean square. 

The expected  norm of the first difference  is
\begin{align*}
  \ev &\bigg \|\frac {1} { \sqrt n}\sum_{i \in N} Q_i \bigg[   \phi(X_i)  \otimes   \mu_Y -\phi(X_i) \otimes \tilde \mu_Y \bigg] \bigg \|^2\\
  &=\ev \bigg \| \sqrt n(  \mu_Y-  \tilde  \mu_Y) \otimes \frac {1} { \sqrt n} \sum_{i \in N} Q_i  \phi(X_i)  \bigg \|^2  \\
 & \leq \sqrt {\ev \bigg \| \sqrt n( \tilde \mu_Y - \mu_Y ) \bigg \|^4 \ev \bigg \| \frac {1} { \sqrt n} \sum_{i \in N} Q_i  \phi(X_i)  \bigg \|^4}.
 \end{align*}
We used $\|v \otimes u\| = \|v \|\| u\|$ and Cauchy-Schwarz inequality.  By Lemma \ref{lem:hahaha} the first term is $O(1)$. The second term is equal to 
 \begin{align*}
 \ev  \|  \frac {1} { n} \sum_{i \in N} Q_i  \phi(X_i)    \|^4 = \ev  \left(\frac {1} { n^2}  \sum_{i,j} k(X_i,X_j)Q_iQ_j \right)^2.
 \end{align*}
The expected value converges  to zero in mean square by  Lemma \ref{lem:higherVstats} (the assumption $\sup_{i,j}  k(X_i,X_j) < \infty$ is satisfied). Using similar reasoning, the second term 
\begin{align*}
 \ev &\bigg \|\frac {1} { \sqrt n}\sum_{i \in N} Q_i \bigg[  \phi(Y_i) \otimes \tilde \mu_X  -\phi(Y_i)  \otimes   \mu_X \bigg] \bigg \|^2
\end{align*}
 also converges to zero. The final term is 
 \begin{align*} 
 \ev  &\bigg \|  \frac {1} { \sqrt n} \sum_{i \in N}Q_i  ( \tilde  \mu_X   \otimes \tilde \mu_Y - \mu_Y \otimes \mu_X)\bigg \|^2  \\
 &=\ev \left| \frac {1} {  n} \sum_{i \in N}Q_i \right | \ev \bigg \|  \sqrt n (\tilde  \mu_X   \otimes \tilde \mu_Y - \mu_Y \otimes \mu_X )\bigg \|^2 \\
\end{align*}
$ \frac {1} { n} \sum_{i \in N}Q_i$ converges in  mean square to zero (Lemmas \ref{lem:meanWi}, \ref{stmt:obviousD}). We rewrite the second term  
\begin{align*}
 \ev \bigg \| \sqrt n (\tilde  \mu_X   \otimes \tilde \mu_Y -  \tilde\mu_Y \otimes \mu_X + \tilde \mu_Y \otimes \mu_X - \mu_Y \otimes \mu_X) \bigg \|^2 
 \end{align*}
It is sufficient to bound  
 \begin{align*}
 \ev \bigg \|  \sqrt n \tilde \mu_Y \otimes ( \tilde \mu_X   -\mu_X) \bigg \|^2 & \leq  \ev  \sqrt {\bigg \|  \tilde \mu_Y \bigg \|^4 \ev \bigg \| \sqrt n ( \tilde \mu_X   -\mu_X) \bigg \|^4}\\
   \ev \bigg \|  \sqrt n \mu_X \otimes ( \tilde \mu_Y   -\mu_Y) \bigg \|^2  &=    \bigg \|  \mu_X \bigg \|^2  \ev \bigg \|\sqrt n ( \tilde \mu_Y   -\mu_Y) \bigg \|^2  \\
\end{align*}
$ \ev   \| \tilde \mu_Y \|^4 = E \frac {1} {n^4} \sum_{i \in N^4} (l \times l)(Y_i) = O(1)$, since $ l$ is bounded. By Lemma \ref{lem:hahaha} $\ev \big \| \sqrt n ( \tilde \mu_X   -\mu_X) \big \|^4$ and $ \ev \big \|\sqrt n ( \tilde \mu_Y   -\mu_Y) \big \|^2$ are finite. Thus, to whole expression converges to zero. We proved  that $T_n-S_n$ converges in  mean square to zero.
We have 
%why not absolute value
\begin{align*}
  \ev | \|T_n\|^2 - \|S_n\|^2 |  & \leq\ev  \big | \|T_n\| - \|S_n\| \big | \big |  \|T_n\| +  \|S_n\| \big | \\
  &\leq \sqrt{ \ev \big | \|T_n\| - \|S_n\| \big |^2  \ev \big| \|T_n\| +  \|S_n\| \big|^2 }\\ 
\end{align*}
To show that the above expression converges to zero it is sufficient to show that  $\ev \|T_n\|^2< \infty$ and  $\ev  \|S_n\|^2< \infty$.
Under the null hypothesis, by Lemma \ref{lem:higherVstats}, expected value of   $\ev \|T_n\|^2 =n B_n(h_2) $ is finite. 
Since $\ev \| T_n -S_n\|^2 \to 0$ we also have  $\ev \| T_n -S_n\| \to 0$. 
Therefore we have 
%parallegram identity how?
\begin{align*}
 \ev \|S_n\|^2 &\leq \ev \|S_n-T_n+T_n\|^2 \\
 & \leq \ev \|S_n-T_n\|^2+ \ev \| T_n -S_n\| \ev \|T_n\| + \ev\|T_n\|^2 < \infty
\end{align*}
Under the alternative we have 
\begin{align*}
  n^{-1}  \ev | \|T_n\|^2 - \|S_n\|^2 |  &\leq n^{-1}  \ev \big | \|T_n\| - \|S_n\| \big | \big |   \|T_n\| +  \|S_n\| \big | \\
  &\leq \sqrt{ \ev \big | \|T_n\| - \|S_n\| \big |^2   n^{-1}\ev \big| \|T_n\| +  \|S_n\| \big|^2 }\\ 
\end{align*}
it is sufficient to show that  $n^{-1}  \ev\|T_n\|^2< \infty$ and  $n^{-1}  \ev \|S_n\|^2< \infty$. By Theorem \ref{th:mainTwo}, $n^{-1}  \ev \|T_n\|^2< \infty$ is finite and, using the reasoning similar to the one above, we have that    $n^{-1}  \ev \|S_n\|^2< \infty$.
\end{proof}

This shows  that we can use squared norm of $S_n$ as a bootstrapped test statistic.  For HSIC we redefine $B_n$ 
\begin{align}
\label{eq:hsic:1}
B_n^* :=  \| S_n \|^2 = \frac 1  n \sum_{i,j \in N }Q_i, Q_j \tilde k(X_i,X_j) \tilde l(X_i,X_j).
\end{align}
$B_{1}^*$ corresponds to $Q_i=W_i$, $B_{2}^*$  corresponds to $Q_i= \tilde W_i$ . This bootstrapped statistic interestingly coincides with $V_n(h)$. \cite{gretton_kernel_2008} showed that 
\begin{align}
\label{eq:hsic:2}
V_n(h) = \frac 1  n \sum_{i,j \in N }\tilde k(X_i,X_j) \tilde l(X_i,X_j).
\end{align}
Finally, notice that both statistics \ref{eq:hsic:1} and \ref{eq:hsic:2} can be calculated in quadratic time.

\begin{proposition}
\label{prop:null}
Let  $Z_t=\left(X_t,Y_t\right)$  be a stationary process  that is $\tau$-dependent such that $\sum_{r=1}^{\infty} r^2 \sqrt{ \tau(r)} <\infty$. Under the null hypothesis  $B_n^*$ (\ref{eq:hsic:1}) and $n V_n(h)$  (\ref{eq:hsic:2})converge weakly to the same distribution. Under the alternative hypothesis $B_n^*$  converges to zero in probability, while $V_n(h)$ converges to a positive constant. 
\end{proposition}
\begin{proof}
We calculate 
\begin{align*}
 n V_n(h) - B_n^* = n V_n(h) - 6 nB_n(h_2) + 6 nB_n(h_2) -  B_n^* .
\end{align*}
 By Lemma \ref{lem:T_n},  $6 nB_n(h_2) = \| T_n \|^2$. By definition \eqref{eq:hsic:1}, $B_n^* =  \| S_n \|^2 $ .  By Lemma \ref{lem:difS_nT_n}, $6 nB_n(h_2) -  B_n^*$ converges to zero in mean square. We check assumptions; since process $Z_t$ is $\tau$-mixing (of order $o(r^{-4})$ ) and both $\bar k$, $\bar l$ are canonical, Lemma \ref{lem:disentangle} guarantees that  $(\bar k,Z_i)$, $(\bar l,Z_i)$  are of type $\varDelta$ of order $O(r^{-4})$.
 
 Under the null  hypothesis, by Theorem \ref{th:mainOne}, $n V_n(h) - 6 nB_n(h_2)$ converges to zero.  We check assumptions; by Lemma \ref{stm:thanks}, $h_2$ is a symmetric, one-degenerate, bounded kernel, assumptions concerning $\tau$-mixing are satisfied. 
 
 Under the alternative, by Theorem \ref{th:mainTwo} and  Lemma \ref{lem:difS_nT_n} respectively,  $6 B_n(h_2)$ and $\frac 1 n B_n^* -6 B_n(h_2) $ converge to zero in mean square. By Theorem \ref{th:mainTwo}, $V_n(h)$ converges to a positive constant.
 \end{proof}

We consider two types of tests: instantaneous independence and independence at multiple time lags.

%\vspace{-4mm}
\paragraph{Test of instantaneous independence}
 
Here, the null hypothesis  $\mathbf{H_0}$ is that  $X_t$ and $Y_t$ are independent at all times $t$,  and the alternative hypothesis $\mathbf{H_1}$ is that they are dependent. We use Proposition\ref{prop:null} directly to bootstrap an appropriate quantile and compare it with a test statistic.  

\paragraph{Lag-HSIC}
Proposition \ref{prop:null} allows us to construct a test of time series independence that is similar to one designed by  \cite{besserve_statistical_2013}. Here, we will be testing against a broader null hypothesis:  $X_t$ and $Y_{t'}$ are independent for $|t-t'|<M$ for an arbitrary large but fixed $M$. 

Since the time series $Z_t=(X_t,Y_t)$ is stationary, it suffices to check whether there exists a dependency between $X_t$ and $Y_{t+m}$ for $-M \leq m \leq M$. Since each lag corresponds to an individual hypothesis, we will require a Bonferroni correction to attain a desired test level $\alpha$. We therefore define $q = 1-\frac{\alpha}{2M+1}$. The shifted time series will be denoted $Z_t^m =(X_t,Y_{t+m})$. Let $S_{m,n}=n V_n(h,Z^m)$ denote the value of the normalized HSIC statistic calculated on the shifted process $Z_t^m$. Let $F_{b,n}$ denote the empirical cumulative distribution function obtained by the bootstrap procedure using $B_n^*$ (\ref{eq:hsic:1}). The test will then reject the null hypothesis if the event $\mathcal A_n = \left\{ \max_{-M \leq m \leq M} S_{m,n} > F^{-1}_{b,n}(q) \right\}$ occurs. By a simple application of the union bound, it is clear that the asymptotic probability of the Type I error will be $\lim_{n\to\infty}P_{\,\mathbf{H_0}}\left(\mathcal A_n\right)\leq\alpha$. On the other hand, if the alternative holds, there exists some $m$ with $|m|\leq M$ for which $V_n(h,Z^m)=n^{-1} S_{m,n}$ converges to a non-zero constant. In this case  
\begin{align}
\label{eg:aletrnative1}
&P_{\,\mathbf{H_1}}(\mathcal A_n)  \geq  P_{\,\mathbf{H_1}}( S_{m,n} > F^{-1}_{b,n}(q)) = P_{\,\mathbf{H_1}}( n^{-1} S_{m,n} > n^{-1} F^{-1}_{b,n}(q) ) \to 1
\end{align}
as long as $n^{-1} F^{-1}_{b,n}(q)\to 0$, which follows from the convergence of $B_n^*$ (\ref{eq:hsic:1}) to zero in probability shown in Proposition \ref{prop:null}. Therefore, the Type II error of the multiple lag test is guaranteed to converge to zero as the sample size increases.
Our experiments in the next Section demonstrate that while this procedure is defined over a finite range of lags, it results in tests  more powerful than the procedure for an infinite number of lags proposed in \cite{besserve_statistical_2013}. 
We note that a procedure that works for an infinite number of lags, although possible to construct, does not add much practical value under the present assumptions. Indeed,  since the $\tau$-mixing assumption applies to the joint sequence $Z_t=(X_t,Y_t)$, dependence between $X_t$ and $Y_{t+m}$ is bound to disappear at a rate of $o(m^{-6})$, i.e., the variables both within and across the two series are assumed to become gradually independent at large lags.

\begin{table}\caption{Rejection rates for two-sample experiments. {\bf MCMC}: sample size=500; a Gaussian kernel with bandwidth
$\sigma=1.7$ is used; every second Gibbs sample is kept (i.e., after a pass
through both dimensions). {\bf Audio}: sample sizes are $(n_x,n_y)=\{(300,200),(600,400),(900,600)\}$; a Gaussian kernel with bandwidth
$\sigma=14$ is used. {\bf Both}: wild bootstrap
uses blocksize of $l_n=20$; averaged over at least 200 trials. The Type II error for all tests was zero}
\label{tab:gibbs_mmd}
\centering{}%
\begin{tabular}{|c|c|c|c|c|c|}
\cline{2-6} 
\multicolumn{1}{c|}{} & {\footnotesize experiment $\backslash$ method} & {\footnotesize permutation} & {\footnotesize $\widehat{\text{MMD}}_{k,b}$} & {\footnotesize $V_{b1}$} & {\footnotesize $V_{b2}$}\tabularnewline
\hline 
\textbf{\scriptsize MCMC} & {\footnotesize i.i.d. vs i.i.d. ($\mathbf{H}_{0}$)} & {\small .040} & {\small .025} & {\small .012}\textbf{\small{} } & {\small .070}\tabularnewline
\cline{2-6} 
 & {\footnotesize i.i.d. vs Gibbs ($\mathbf{H}_{0}$)} & {\small .528 } & {\small .100} & {\small .052} & {\small .105}\tabularnewline
\cline{2-6} 
 & {\footnotesize Gibbs vs Gibbs ($\mathbf{H}_{0}$)} & {\small .680 } & {\small .110} & {\small .060} & {\small .100}\tabularnewline
\hline 
\textbf{\scriptsize Audio} & {\footnotesize $\mathbf{H}_{0}$} & {\small \{.970,.965,.995\}} & {\small \{.145,.120,.114\}} & \multicolumn{1}{c}{} & \multicolumn{1}{c}{}\tabularnewline
\cline{2-4} 
 & {\footnotesize $\mathbf{H}_{1}$} & {\small \{1,1,1\}} & {\small \{.600,.898,.995\}} & \multicolumn{1}{c}{} & \multicolumn{1}{c}{}\tabularnewline
\cline{1-4} 
\end{tabular}
\end{table}

\vspace{-2mm}
\section{Experiments}
\label{sec:Experiments}

%397.5
\begin{figure}
\centering  
\includegraphics[width=397.5pt,height=102.5pt]{./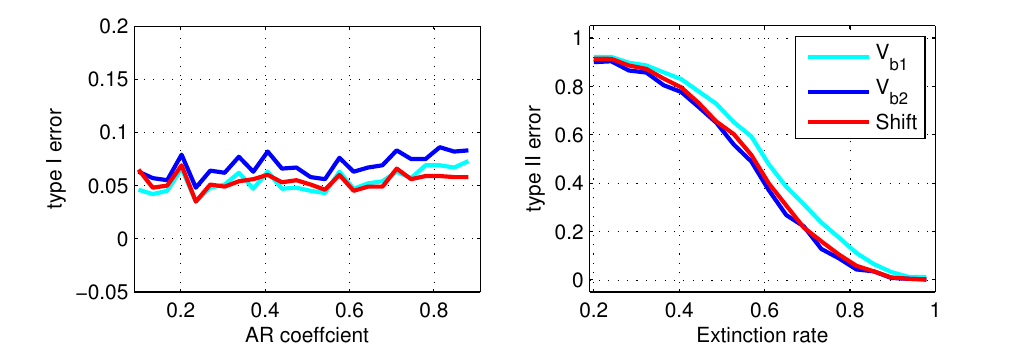}
\caption{Comparison of Shift-HSIC and tests based on $V_{b1}$ and $V_{b2}$. The left panel shows the performance under the null hypothesis, where a larger AR coefficient implies a stronger temporal dependence. The right panel show the performance under the alternative hypothesis, where a larger extinction rate implies a greater dependence between processes.}
\label{fig:arExtinct}
\end{figure}

\begin{figure}
\centering  
\includegraphics[width=397.5pt,height=102.5pt]{./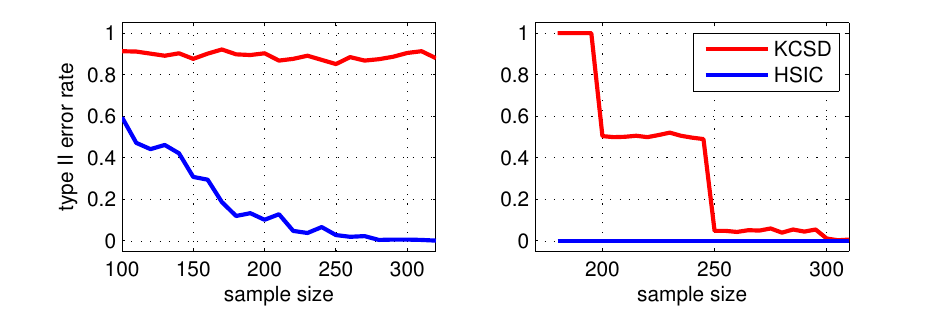}
\caption{In both panel Type II error is plotted. The left panel presents the error of the lag-HSIC and KCSD algorithms for a process following dynamics given by the equation \eqref{eq:dynamics2}.  The errors for a process with dynamics given by equations \eqref{eg:dymamics1a} and \eqref{eg:dymamics1b} are shown in the right panel. The X axis is indexed by the time series length, i.e., sample size. The Type I error was around 5\%.}
\label{fig:phaseAndVar}
\end{figure}

\vspace{-4mm}
\paragraph{The MCMC M.D.}
We employ MMD in order to diagnose how far an MCMC chain is from its stationary distribution \cite[Section 5]{sejdinovic_KAMH}, 
by comparing the MCMC sample to a benchmark sample. A hypothesis test of whether the sampler has converged based on the standard permutation-based bootstrap leads to too many rejections of the null hypothesis, due to  dependence within the chain. Thus, one would require heavily thinned chains, which is wasteful of samples and computationally burdensome.
Our experiments indicate that the wild bootstrap approach allows consistent tests directly on the chains, as it attains a desired number of false positives.\\
To assess performance of the wild bootstrap in determining MCMC convergence, we consider the situation where samples $\{X_i\}$ and $\{Y_i\}$ are bivariate, and both have the identical marginal distribution given by an elongated normal
$P=\mathcal{N}\left(\left[\protect\begin{array}{cc}
0 & 0\protect\end{array}\right],\left[\protect\begin{array}{cc}
15.5 & 14.5\protect\\
14.5 & 15.5
\protect\end{array}\right]\right)$.
However, they could have arisen either as independent samples, or as outputs of the Gibbs sampler with stationary distribution $P$. 
Table \ref{tab:gibbs_mmd} shows the \emph{rejection rates} under the significance level $\alpha=0.05$. It is clear that in the case where at least one of the samples is a Gibbs chain, the permutation-based test has a Type I error much larger than $\alpha$. 
The wild bootstrap using $V_{b1}$ (without artificial degeneration) yields the correct Type I error control in these cases. Consistent with findings in \cite[Section 5]{leucht_dependent_2013}, $V_{b1}$ mimics the null distribution better than $V_{b2}$. The bootstrapped statistic $\widehat{\text{MMD}}_{k,b}$ in \eqref{eq:mmdkb} which also relies on the artificially degenerated bootstrap processes, behaves similarly to $V_{b2}$.
In the alternative scenario where $\{Y_i\}$ was taken from a distribution with the same covariance structure but with the mean set to $\mu= \left[\protect\begin{array}{cc}
2.5 & 0\protect\end{array}\right]$, the Type II error for all tests was zero.
\vspace{-4mm}
\paragraph{Pitch-evoking sounds}
Our second experiment  is a two sample test on sounds studied in the field of pitch perception \cite{hehrmannthesis}. We synthesise the sounds with the fundamental frequency
parameter of treble C, subsampled at 10.46kHz. Each $i$-th period
of length $\Omega$ contains $d=20$ audio samples at times $0=t_{1}<\ldots<t_{d}<\Omega$
-- we treat this whole vector as a single observation $X_{i}$ or
$Y_{i}$, i.e., we are comparing distributions on $\mathbb{R}^{20}$.
Sounds are generated based on the AR process $a_{i}=\lambda a_{i-1}+\sqrt{1-\lambda^{2}}\epsilon_{i}$,
where $a_{0},\epsilon_{i}\sim\mathcal{N}(0,I_{d})$, with $X_{i,r}=\sum_{j}\sum_{s=1}^{d}a_{j,s}\exp\left(-\frac{\left(t_{r}-t_{s}-(j-i)\Omega\right)^{2}}{2\sigma^{2}}\right)$.
Thus, a given pattern -- a smoothed version of $a_{0}$ -- slowly
varies, and hence the sound deviates from
periodicity, but still evokes a pitch. We take
$X$ with $\sigma=0.1\Omega$
and $\lambda=0.8$, and $Y$ is either an independent copy of $X$
(null scenario), or has $\sigma=0.05\Omega$ (alternative scenario)
(Variation in the smoothness parameter changes the width of the
spectral envelope, i.e., the brightness of the sound). $n_x$ is taken to be different from $n_y$. Results in Table \ref{tab:gibbs_mmd} demonstrate that the
approach using the wild bootstrapped statistic in \eqref{eq:mmdkb} allows control of the Type I error and reduction of the Type II error with increasing sample size, while the permutation test
virtually always rejects the null hypothesis.
As in \cite{leucht_dependent_2013} and the MCMC example, the artificial degeneration of the wild bootstrap process causes the Type I error to remain above the design parameter of $0.05$, although it can be observed to drop with increasing sample size.

% \begin{table}
% \caption{Type I error for independent
% samples and Gibbs chains with identical marginal distributions; sample size=500, averaged over 200 trials. Wild bootstrap
% uses blocksize of $l_n=20$. A gaussian kernel with bandwidth
% $\sigma=1.7$ is used. Every second Gibbs sample is kept (i.e., after a pass
% through both dimensions).}
% \label{tab:gibbs_mmd}
% 
% \centering{}%
% \begin{tabular}{|c|c|c|c|c|}
% \hline 
% $X$ vs $Y$ & vanilla & $V_{b1}$ & $V_{b2}$ & $\widehat{\text{MMD}}_{k,b}$\tabularnewline
% \hline 
%  i.i.d. vs i.i.d.  & .040 & \textbf{.012}  & .070 & .025\tabularnewline
% \hline 
% i.i.d. vs Gibbs & .528  & \textbf{.052} & .105 & .100\tabularnewline
% \hline 
% Gibbs vs Gibbs & .680  & \textbf{.060} & .100 & .110\tabularnewline
% \hline 
% \end{tabular}
%\end{table}

\vspace{-4mm}
\paragraph{Instantaneous independence}
To examine instantaneous independence test performance, we compare it with the Shift-HSIC procedure \cite{chwialkowski2014kernel} on the 'Extinct Gaussian' autoregressive process proposed in the \cite[Section 4.1]{chwialkowski2014kernel}. Using exactly the same setting we compute type I error as a function of the temporal dependence and type II error as a function of extinction rate. Figure \ref{fig:arExtinct} shows that all three tests (Shift-HSIC and tests based on $V_{b1}$ and $V_{b2}$) perform similarly.   
\vspace{-0.2cm}
\paragraph{Lag-HSIC}
The KCSD  \cite{besserve_statistical_2013} is, to our knowledge, the only test procedure to reject the null hypothesis if there exist $t$,$t'$ such that $Z_t$ and $Z_{t'}$ are dependent. In the experiments, we compare lag-HSIC with KCSD on two kinds of processes: one  inspired by econometrics and one from \cite{besserve_statistical_2013}.\\ 
In lag-HSIC, the number of lags under examination was equal to $\max\{10,\log n\}$, where $n$ is the sample size. We used Gaussian kernels with widths estimated by the median heuristic. The cumulative distribution of the $V$-statistics was approximated by samples from $n V_{b2}$. To model the tail of this distribution, we have fitted the generalized Pareto distribution to the bootstrapped samples (\cite{pickands1975statistical} shows that for a large class of underlying distribution functions such an approximation is valid).\\
The first process is a pair of two time series which share a common variance,   
\begin{align}
\label{eq:dynamics2}
 X_t = \epsilon_{1,t} \sigma_t^2, \quad  Y_t = \epsilon_{2,t}  \sigma_t^2,  \sigma_t^2 = 1 + 0.45(X_{t-1}^2 + Y_{t-1}^2 ), \quad \epsilon_{i,t} \iid \mathcal N(0,1), \quad i \in \{1,2\} .
\end{align}
The above set of equations is an instance of the VEC dynamics \cite{bauwens_multivariate_2006} used in econometrics to model market volatility. The left panel of the Figure \ref{fig:phaseAndVar} presents the Type II error rate: for KCSD it remains at 90\% while for lag-HSIC it gradually drops to zero. The Type I error, which we calculated by sampling two independent copies $(X^{(1)}_{t},Y^{(1)}_{t})$ and $(X^{(2)}_{t},Y^{(2)}_{t})$ of the process and performing the tests on the pair $(X^{(1)}_{t},Y^{(2)}_{t})$, was around 5\% for both of the tests.\\
Our next experiment is a process sampled according to the dynamics proposed by \cite{besserve_statistical_2013},      
\begin{alignat}{3}
  \quad X_t &= \cos(\phi_{t,1}),   &\quad  \phi_{t,1} &= \phi_{t-1,1} + 0.1\epsilon_{1,t} + 2 \pi f_1 T_s, &\quad  \epsilon_{1,t}  \iid \mathcal N(0,1),  \label{eg:dymamics1a} \\  
  Y_t &= [2+C\sin(\phi_{t,1})]\cos(\phi_{t,2}),  &   \phi_{t,2} &= \phi_{t-1,2} + 0.1\epsilon_{2,t} + 2 \pi f_2 T_s , &  \epsilon_{2,t}  \iid \mathcal N(0,1), \label{eg:dymamics1b}
\end{alignat}
with parameters $C=.4$, $f_1=4Hz$,$f_2=20Hz$, and frequency $\frac {1} {T_s} = 100 Hz$. We compared performance of the KCSD algorithm, with parameters set to vales recommended in \cite{besserve_statistical_2013}, and the lag-HSIC algorithm. The Type II error of lag-HSIC, presented in the right panel of the Figure \ref{fig:phaseAndVar}, is substantially lower than that of KCSD. The Type I error ($C=0$) is equal or lower than 5\% for both procedures. Most oddly, KCSD error seems to converge to zero in steps. This may be due to the method relying on a spectral decomposition of the signals across a fixed set of bands. As the number of samples increases, the quality of the spectrogram will improve, and dependence will become apparent in bands where it was undetectable at shorter signal lengths.

\small
\bibliographystyle{plain}
\bibliography{nips14Bib}

\newpage
\normalsize
\appendix

\section{An Introduction to the Wild Bootstrap}
\label{wildintro}
Bootstrap methods aim to evaluate the accuracy of the sample estimates - they are particularly useful when dealing with complicated distributions, or when the assumptions of a parametric procedure are in doubt. Bootstrap methods randomize the dataset used for the sample estimate calculation, so that a new dataset with a similar statistical properties is obtained, e.g. one popular method is resampling. In the wild bootstrap method  the observations in the dataset are multiplied by  appropriate random numbers. To present the idea behind the wild bootstrap we will discuss a toy example similar to that in \cite{Shao2010}, and then relate it to the wild bootstrap method used in this article. 

Consider a stationary autoregressive-moving-average random process $\{X_i\}_{i \in \mathbf{Z}}$ with zero mean. The normalized sample mean of the process $X_t$ has normal distribution
\begin{equation}
\frac{\sum_{i=1}^{N} X_i}{\sqrt{n}} \overset{d}{\to} N(0,\sigma_{\infty}^{2}),    
\end{equation}      
where $\sigma_{\infty}^2 = \sum_{j=-\infty}^{j=\infty} cov(X_0,X_j)$. The variance $\sigma_{\infty}^2$ is not easy to estimate (the naive approach of approximating different covariances separately and summing them has several drawbacks, e.g. how many empirical covariances should be calculated?). Using the wild bootstrap method we will construct processes that mimic behaviour of the $X_t$ process and then use them to approximate the distribution of the normalized sample mean, $\frac{\sum_{i=1}^{N} X_i}{\sqrt{n}}$. The bootstrap process used to to randomize the sample meets the following criteria: 
\begin{itemize}
\item $\{W_{t,n}\}_{1 \leq t \leq n }$ is a row-wise strictly stationary triangular array independent of all $X_t$, such that $\ev W_{t,n}=0$ and $\sup_{n} \ev|W_{t,n}^{2+\sigma}| < \infty$ for some $\sigma > 0$. 
\item The autocovariance of the process is given by $\ev W_{s,n} W_{t,n}=\rho(|s-t|/l_n)$ for some function $\rho$, such that $\lim_{u \to 0} \rho(u) = 1$. 
\item The sequence $\left\{l_n\right\}$ is taken such that $\lim_{n \to \infty} l_n = \infty$.
\end{itemize}
A process that fulfils those criteria, given also in the main article, is
\begin{align}
W_{t,n} = e^{-1/l_n}W_{t-1,n} + \sqrt{1 -e^{-2/l_n}} \epsilon_t
\end{align} 
  
We need to show that the distribution of the normalized sample mean of the process  $Y_t^{n} = W_t^{n}X_t$, where $|t| \leq n$, mimics the distribution $N(0,\sigma_{\infty}^2)$. It suffices to calculate the expected value and correlations:   
\begin{align}
\ev Y_t^{n} &= \ev W_t^n X_t = 0 ,\\
cov(Y_0^n,Y_t^n) &= cov(X_0,X_t)cov(Y_0^n,Y_t^n) = cov(X_0,X_t)\rho(|t|/l_n) \to cov(X_0,X_t)
\end{align}
The asymptotic auto-covariance structure of the process $Y_t$ is the same as the auto-covariance structure of the process $X_t$. Therefore 
\begin{equation}
\frac{\sum_{i=1}^{N} Y_i}{\sqrt{n}} \overset{d}{\to} N(0,\sigma_{\infty}).    
\end{equation}  

This mechanism is used in \cite{leucht_dependent_2013}. Recall that, under some assumptions, a normalized V-statistic can be written as 
$$
\sum_{k=0}^{\infty} \lambda_k  \left( \frac{ \sum_{i=1}^{n} \phi_k(X_i) } {\sqrt n}  \right)^2 \overset{P}{=} \frac 1  n \sum_{1\leq i,j \leq n} h(X_i,X_j) 
$$ 

where $\lambda_k$ are eigenvalues and $\phi_k$ are eigenfunction of the  kernel $h$, respectively.  Since $\ev  \phi_k(X_i) = 0$ (degeneracy condition) one may replace  
$$  \frac{ \sum_{i=1}^{n} \phi_k(X_i)} {\sqrt n} $$
with a bootstrapped version 
$$ \frac{  \sum_{i=1}^{n}  W_t^n \phi_k(X_i) } {\sqrt n}, $$  
and conclude, as in the toy example, that the limiting distribution of the single component of the sum $\sum_k \lambda_k  ...$  remains the same. One of the main  contributions of \cite{leucht_dependent_2013}  is in showing that the distribution of the whole sum $\sum_k \lambda_k \left(\frac{  \sum_{i=1}^{n}  W_t^n \phi_k(X_i) } {\sqrt n} \right)^2$ with the components bootstrapped  
converges in a particular sense (in  probability in Prokhorov metric) to the distribution of the normalized V-statistic, $\frac 1  n \sum_{1\leq i,j \leq n} h(X_i,X_j) $.

%is the same (in quite a peculiar sense i.e. convergence in  probability in Prokhorov metric) as distribution of the normalized V-statistic, $\frac 1  n \sum_{1\leq i,j \leq n} h(X_i,X_j) $.

\section{Relation between $\beta$,$\phi$ and $\tau$ mixing}
\label{append:differentMixing}

Strong mixing is historically the most studied type of temporal dependence -- a lot of models, example being Markov Chains, are proved to be strongly mixing, therefore it's useful to relate weak mixing  to strong mixing. For a random variable $X$ on a probability space $(\Omega,\mathcal{F},P_X)$ and $\mathcal{M} \subset \mathcal{F}$ we define 
\begin{equation*}
\beta(\mathcal{M},\sigma(X)) = \| \sup_{A \in \mathbb{B}(R)} | P_{X|\mathcal{M}}(A) - P_X(A)|\|_1.
\end{equation*}
A process  is called $\beta$-mixing or absolutely regular if  
\begin{align*}
\beta(r) &= \sup_{l \in \mathbb{N}} \frac 1 l \sup_{ r \leq i_1 \leq ... \leq i_l} \beta( \mathcal F_0,(X_{i_1},...,X_{i_l}) )  \overset{r \to \infty}{\longrightarrow} 0,\
\end{align*}
$\phi$ mixing is defined similarly
\begin{equation*}
\phi(\mathcal{M},\sigma(X)) = \| \sup_{A \in \mathbb{B}(R)} | P_{X|\mathcal{M}}(A) - P_X(A)|\|_\infty.
\end{equation*}
 By \cite{bradley_basic_2005} we have  $\beta(\mathcal{M},\sigma(X)) \leq \phi(\mathcal{M},\sigma(X)) $ .

\cite[Equation  7.6]{dedecker2005new} relates $\tau$-mixing and $\beta$-mixing , as follows: if $Q_x$ is the generalized inverse of the tail function
\[
 Q_x(u) = \inf_{t \in R} \{  P(|X| > t) \leq u\},  
\]
then
\[
 \tau(\mathcal{M},X) \leq 2 \int_{0}^{\beta(\mathcal{M},\sigma(X))}  Q_x(u) du.
\]
While this definition can be hard to interpret, it can be simplified in the case $E|X|^p=M$  for some $p>1$, since via Markov's inequality $P(|X|>t) \leq \frac{M}{t^p}$, and thus $\frac{M}{t^p} \leq u $ implies $P(|X|>t) \leq u$. Therefore $Q'(u) = \frac{M}{\sqrt[p]{u}} \geq Q_x(u)$. As a result, we have the following inequality 
\begin{equation}
\label{eq:theBeta}
 \frac{ \sqrt[p]{ \beta(\mathcal{M},\sigma(X))} }{ M }  \geq C  \tau(\mathcal{M},X) 
\end{equation}

\paragraph{Models that satisfy $\tau$-mixing.}
\cite{dedecker2005new} provides  examples of systems that are tau-mixing. In particular, given that certain assumptions are satisfied  causal functions of stationary sequences, iterated random functions, Markov chains, expanding maps are all $\tau$-mixing. 

Of particular interest to this work are Markov chains. The assumptions provided by \cite{dedecker2005new}, under which Markov chains are tau-mixing are somehow difficult to check but we can use classical theorems about the $\beta$-mixing).  In particular \cite[Corollary 3.6]{bradley_basic_2005}  states that a Harris recurrent (chain returns to a fixed set of the state space an infinite number of times) and aperiodic Markov chain satisfies absolute regularity.  \cite[Theorem 3.7]{bradley_basic_2005} states that geometric ergodicity \footnote{ $\forall_{x} \|P^n(x,\cdot)  -\pi \|_{TV} \leq C q^n, 0<q<1$} implies geometric decay of the $\beta$ coefficient. Interestingly \cite[Theorem 3.3]{bradley_basic_2005} describes situations in which a non-stationary chain $\beta$-mixes exponentially. 

Using inequality \ref{eq:theBeta} between $\tau$-mixing coefficient and strong mixing coefficients one can use  those classical theorems show that e.g for  $p=2$ we have 
\[
 \sqrt{ \beta(\mathcal{M},\sigma(X))}  \geq \tau(\mathcal{M},X).
\]

\section{Proofs}
\label{sec:Wildproofs}
In this section we prove the main theorems. As for the notation,  $n$ denotes number of observations, $N = \{1,\cdots, n\}$, if $h$ is  function then $h \times h$ denotes  a  product of $h$ with itself, $\lim_{n \to \infty} X_n \overset{L_2}{=} X$ denotes convergence in mean square 

\subsection{Proof of the Theorem   \ref{th:mainOne}} 
\label{sec:prMainOne}

Hoeffding decomposition reduces any $V$-statistic to a sum of canonical $V$-statistics with canonical cores $h_c$, which are easier to study in context of  non-iid data. As an illustration, consider a canonical core $h$ of $m$ arguments and fix some indexes  $i_1 \leq  \cdots \leq i_{m-1} \ll i_m $, for a sake of  example we may assume that indexes represent time. If observations $Z_{i_1}, \cdots, Z_{i_{m-1}}$ are independent of the observation $Z_{i_m}$, then the expected value of $h(Z_{i_1}, \cdots, Z_{i_m})$, by degeneracy, is equal to zero. If it is reasonable to assume that $Z_{i_m}$ is almost independent of $Z_{i_1}, \cdots, Z_{i_{m-1}}$, maybe because it is so distant in time, then it is also reasonable to expect that for a canonical core $h$ (which is not too complicated ) 
\[
 \ev h(Z_{i_1}, \cdot, Z_{i_m}) \approx 0.
\]
which follows from the following approximate calculation
\[
 \int h(z_{i_1}, \cdot, z_{i_m}) dP_{Z_{i_1}, \cdots, Z_{i_m}} \approx  \int h(z_{i_1}, \cdots, z_{i_m}) dP_{Z_{i_1}, \cdots, Z_{i_{m-1}}} dP_{Z_{i_m}} =0
\]
We formalize this intuition.
\begin{definition}
 Associate with  any  set of indexes $ i_1,\cdots,i_m$ its nearest neighbor within the set. Suppose $i_r$ is is an index with the most distant nearest neighbor. We will call $i_r$ the most isolated index, and we will refer to its distance to the nearest neighbor as an isolation distance.
\end{definition}
Consider a following example, for the set $\{1,5,7\}$, $1$ is the most isolated index and the isolation distance is $4$.
\begin{definition}
\label{isolation}
\label{def:varDelta}
 Given a sequence of random variables $Z_{t} $ and a function $h$, if for all sets of indexes $i_1,\cdots,i_m$, with the isolation distance equal to $r$ 
 \[
  |E h(Z_{i_1}, \cdots, Z_{i_m})| \leq \varDelta(h,r)
 \]
 for some some function $\varDelta$, then we say that the pair $(h,Z_{t})$ is of type $\varDelta$. 
\end{definition}

The next theorem shows a growth rate of  a canonical $V$-statistic when a pair $h,Z_{t}$ is of type $\varDelta$. 
\begin{Theorem}
\label{th:boundTh}
Let $(Z_{t},h)$, where $h$ is a function of $m>1$ arguments, be a  of type $\varDelta$, with $\varDelta(h,r) = o( r^{-k})$ for some $k$, then
\begin{equation*}
\sum_{i\in N^{m}}\left| \ev  h(Z_i) \right| =  O\left(n^{\left\lfloor \frac{m}{2}\right\rfloor }\right)+ o\left(n^{2\left\lfloor \frac{m}{2}\right\rfloor +2-k}\right).
\end{equation*}
\end{Theorem}
\begin{proof} 
The proof uses a technique similar to   \cite[Lemma 3]{arcones1998law}.
 We will focus on ordered $m$-tuples $1\leq i_{1}\leq\ldots\leq i_{m}\leq n$,
and by considering all possible permutations of their indices, we
obtain an upper bound 
\begin{equation*}
\sum_{i\in N^{m}}\left|\ev  h\left(Z_{i_{1}},\ldots,Z_{i_{m}}\right)\right|  <  \sum_{1\leq i_{1}\leq\ldots\leq i_{m}\leq n}\sum_{\pi\in S_{m}}\left|  \ev h\left(Z_{i_{\pi(1)}},\ldots,Z_{i_{\pi(m)}}\right)\right|,
\end{equation*}

where (strict) inequality stems from the fact that the $m$-tuples
with some coinciding entries appear multiple times on the right.

Since  $(h,Z_{t})$ is a  of type $\varDelta$
\[
\forall i \in N^{m} \sum_{\pi\in S_{m}}\left| \ev h\left(Z_{i_{\pi(1)}},\ldots,Z_{i_{\pi(m)}}\right)\right| = O( \varDelta(h,w(i)) ),
\]
where $w(i)$ is an isolating distance of the index set $i = i_1,\cdots i_m$. We need to estimate order of the sum 
\begin{equation*}
 \sum_{1\leq i_{1}\leq\ldots\leq i_{m}\leq n}  O( \varDelta(h,w(i)) ).
\end{equation*}
Let us upper bound the number of ordered $m$-tuples $i$ with $w(i)=w$.  Denote $s=\left\lfloor \frac{m}{2}\right\rfloor +1$. 
$i_{1}$ can take $n$ different values, but since $i_{2}\leq i_{1}+w$,
$i_{2}$ can take at most $w+1$ different values.
For $2\leq l\leq s-1$, since $\min\left\{ i_{2l}-i_{2l-1},i_{2l-1}-i_{2l-2}\right\} \leq w$,
we can either let $i_{2l-1}$ take up to $n$ different values and
let $i_{2l}$ take up to $w+1$ different values (if $i_{2l}-i_{2l-1}\leq i_{2l-1}-i_{2l-2}$)
or let $i_{2l-1}$ take up to $w+1$ different values and let $i_{2l}$
take up to $n$ different values (if $i_{2l}-i_{2l-1}>i_{2l-1}-i_{2l-2}$),
upper bounding the total number of choices for $\left[i_{2l-1},i_{2l}\right]$
by $2n(w+1)$. Finally, the last term $i_{m}$ can always have at
most $w+1$ different values.  
This brings the total number of $m$-tuples with $w(i)=w$ to at most $2^{\ensuremath{s-2}}n^{s-1}(w+1)^{s}$.
Thus, the number of $m$-tuples with $w(i)=0$ is $O(n^{s-1})$ and
since $\ev  h\left(Z_{i_{1}},\ldots,Z_{i_{m}}\right) < \infty$, we have
\begin{align*}
 &  \sum_{1\leq i_{1}\leq\ldots\leq i_{m}\leq n}O( \varDelta(h,w(i)) )\\
 & \leq O(n^{s-1})+\sum_{w=1}^{n-1}\;\sum_{\underset{w(i)=w}{1\leq i_{1}\leq\ldots\leq i_{m}\leq n}:} O( \varDelta(h,w(i)) )\\
 & \leq O(n^{s-1})+ n^{s-1}\sum_{w=1}^{n-1}(w+1)^{s} O(\varDelta(h,w))\\
 & \leq O(n^{s-1})+n^{s-1}\sum_{w=1}^{n-1}o(w^{s-k})\\
 & \leq O(n^{s-1})+n^{s-1}\max( o(n^{s-k+1}),O(1))\\
  & \leq O(n^{s-1})+o(n^{2s-k})+ O(n^{s-1}))\\
 &=  O(n^{s-1})+o(n^{2s-k}),
\end{align*}
which proves the claim. We have used $\varDelta(h,w)=o(w^{-k})$. 
\end{proof}
The previous theorem states sufficient conditions for a $V$-statistic  or a bootstrapped $V$-statistic to  converge to zero.
\begin{lemma}
 \label{lem:higherVstats}
 Let $h$ be a function of  $m>1$ arguments and let $(\{ Z_{t}\}_{t \in N},h \times h)$ be a of type $\varDelta$, with $\varDelta(h \times h,r) = o( r^{-4})$. If  $\{ G_i \}_{i \in N}$  is a random process,  independent of $ Z_{t}$, such that $\sup_{i} \ev G_i^4 < \infty$, with notation $T_n =\frac {1} {n^{m-1}} \sum_{i \in N^m}   G_{i_1}G_{i_2}   h(Z_i)  $,
\begin{align*}
 \begin{cases}
 \lim_{n \to \infty}  o(1) T_n \overset{L_2}{=} 0 & m=2,  \\
\lim_{n \to \infty} T_n \overset{L_2}{=} 0  & m>2
\end{cases}
\end{align*}
since, 
\begin{align*}
 \begin{cases}
 \ev T_n^2 = O(1) & m=2,  \\
\ev T_n^2  = o(1)  & m>2. 
\end{cases}
\end{align*}
\end{lemma}
\begin{proof}
First we verify that for $i,j \in N^m$
\[
 a_{i,j} =  \ev G_{i_1} G_{i_2}  G_{j_1} G_{j_2}
\]
is uniformly bounded. We get the bound by applying Cauchy-Schwarz iteratively and using assumption   $\sup_{i} \ev G_i^4 < \infty$. 

We check that the second non-central moment converges to zero,
\begin{align*}
&\ev  \left( T_n \right)^2 \\
&= \frac {1} {n^{2m-2}}   \sum_{i,j \in N^{m}}    \ev G_{i_1} G_{i_2} G_{j_1} G_{j_2} \ev h(Z_i)h(Z_j)   \\
&\leq    \frac {1} {n^{2m-2}} \sum_{i,j \in N^{m}}  |a_{i,j} \ev h(Z_i)h(Z_j) |  &\\
&\leq \left( \sup_{n} \sup_{i,j \in N^m} |a_{i,j}| \right) \frac {1} {n^{2m-2}} \sum_{i,j \in N^{m}} |\ev h(Z_i)h(Z_j) |.
\end{align*}
Supremum over $n$ is needed since $\ev G_{i_1} G_{i_2}  G_{j_1} G_{j_2}$ might change with $n$. Lemma \ref{th:boundTh}, by the assumption that  $(h(\cdots ) \times h(\cdots ),Z_t )$ is of type $\varDelta$, the growth of the inner sum 
$
 \sum_{i,j \in N^{m}} |\ev h(Z_i)h(Z_j) |
$
is at most of order 
\[
O(n^m) +o(n^{2 m +2-k}). 
\]
Since $\varDelta(h \times h,r) =o( r^{-4})$, the growth rate is 
\begin{align*}
 \ev &\left( T_n \right)^2= \frac{O\left(n^m) +o(n^{2m-2}\right) }{ n^{2m-2}} =\begin{cases}
O(1)   & m=2\\
o(1) & m >2
\end{cases}
\end{align*}
For $m=2$ we have assumed existence of an  extra term $o(1)$,  which concludes the proof.
\end{proof}

We next  prove that the asymptotic distribution of a $V$-statistic depends on number of terms in the  Hoeffding decomposition that are equal to zero.

\begin{lemma}
\label{lem:equivVanila}
Let $h$ be a core with $m$ arguments. If $h_0=h_1=0$, and for all $c>2$  component  $(h_c \times h_c,Z_{t})$ is  of type $\varDelta$, with $\varDelta(h_c \times h_c,r) = o( r^{-4})$ then    
\begin{align*}
 \lim_{n \to \infty} \left( n V_n(h) -  \binom m 2  n V_n(h_2)  \right) \overset{L_2}{=} 0
\end{align*}
\end{lemma}

\begin{proof}
Using  Hoeffding decomposition we  write the core  $h$ as a sum of the components $h_c$ ,
\begin{align*}
  n V_n(h) =& n V_n(h_m) + \binom m 1 n V_n(h_{m-1}) + ... \\ 
  &+ \binom {m} {m-2} n V_n(h_{2}) + \binom {m} {m-1} n V_n(h_{1})+h_0.
\end{align*}
$h_0=0$ and  $h_1=0$. By  Lemma \ref{lem:higherVstats}, for $c \geq 3$, $n V_n(h_{c})$  converges to zero in mean squared. To see that it suffices to put $Q=1$ and verify that  $(h_c \times h_c,Z_t)$ is of $\varDelta$ type, which  is explicitly assumed.
\end{proof}

Before we study the asymptotic distribution of a bootstrapped statistic $B_n$ we need to sate three simple lemmas that will be frequently used.  
\begin{lemma}
\label{lem:meanWi}
If $W_i$ is a bootstrap process then
\begin{align*}
\lim_{n \to \infty} \frac {l_n}{ n} \sum_{i=1}^n W_i \overset{L_2}{=} 0.
\end{align*}
\end{lemma}
\begin{proof}
By the definition of $W_i$, $\ev (\sum_{i=1}^n W_i)^2 \leq n 2\sum_{r=1}^n Cov(W_0,W_r)=  nO(l_n)$, where $\sum_{r=1}^n Cov(W_0,W_r)=  O(l_n)$ follows from bootstrap assumption.  Also, by the  Bootstrap assumptions we have $\lim_{n \to \infty} \frac {l_n^3}{n^2} =0 $. Therefore $\frac{1} {n} \sum_{i=1}^{n}W_i$ converges to zero in mean squared.
\end{proof}

\begin{lemma}
\label{stmt:obviousD}
If $\{W_i\}$ is a bootstrap process then
\begin{align*}
\sum_{i=1}^n \tilde W_i = \sum_{i=1}^n  \left( W_i - \frac 1 n \sum_{j=1}^n  W_j \right) = 0. 
\end{align*}
\end{lemma}

\begin{lemma}
\label{lem:summingLema}
Let $f$ be a  function and let $j=\{j_1,\ldots,j_q\}$ be a subset of $\{1,\ldots,m\}$. Then
\begin{align*}
\sum_{i \in N^m} f(Z_{i_{j_1}},...,Z_{i_{j_q}})= n^{m-q} \sum_{i \in N^q} f(Z_{i_1},...,Z_{i_q})
\end{align*}
\end{lemma}
\begin{proof}
 Each element $f(Z_{i_{j_1}},...,Z_{i_{j_q}})$ is repeated exactly $n^{m-q}$ times.
\end{proof}

We now prove an analogue of the Lemma \ref{lem:equivVanila} for bootstrapped statistics $B$.

\begin{lemma}
\label{lem:equivBoot}
Let $h$ be a core of a $m$ arguments and let $Q_i$ denote  $W_i$ or  $\tilde W_i$. If  
\begin{align*}
\frac{1} {n^2}  \sum_{i \in N^2}   Q_{i_1} Q_{i_2} h_0 &=0, \\
\frac{1} {n^m}  \sum_{i \in N^m}  \sum_{1 \leq j \leq m } Q_{i_1} Q_{i_2} h_1(Z_{i_j}) &=0.
\end{align*}
and $(h_c,Z_{t})$ for $c>2$  are  of type $\varDelta$, with $\varDelta(h_c \times h_c,r) = o( r^{-4})$ then    
\begin{align*}
  \lim_{n \to \infty} \left( n B(h) -  \binom m 2  n B(h_2)  \right) \overset{L_2}{=} 0
\end{align*}
\end{lemma}

\begin{proof}
Where  it is necessary, we check claims for both $W_i$ and  $\tilde W_i$ separately. We will frequently use the fact that 
$
 \frac{l_n}{n} \sum_{i=1}^{n}Q_i, \frac{1}{n} \sum_{i=1}^{n}Q_i
$
converge to zero in mean square.

Using Hoeffding decomposition we  write core  $h$ as a sum of components $h_c$ (the ones with $h_0,h_1$ are equal to zero and therefore omitted)
\begin{align*}
 n B_1(h) = \frac{1} {n^{m-1}}  &\sum_{i \in N^m}  \Big[ Q_{i_1} Q_{i_2}   h_m(Z_{i_1},...,Z_{i_m})  + \\ 
 &\sum_{1 \leq j_1 < ...<j_{m-1} \leq m } Q_{i_1} Q_{i_2} h_{m-1}(Z_{i_{j_1}},...,Z_{i_{j_{m-1}}})   + ... + \\
 &\sum_{1 \leq j_1 < j_2 \leq m } Q_{i_1} Q_{i_2} h_2(Z_{i_{j_1}},Z_{i_{j_2}}) \Big].
\end{align*}
Consider the sum associated with $h_c$
\begin{align}
\label{eq:sumfortwo}
\frac{1} {n^{m-1}}  \sum_{i \in N^m}  \sum_{1 \leq j_1 < ...<j_c \leq m } Q_{i_1} Q_{i_2} h_c(Z_{i_{j_1}},...,Z_{i_{j_c}}).
\end{align}
We will show that for almost all fixed  $j_1 < \cdots < j_c$ the sum \ref{eq:sumfortwo} converges to zero.

Suppose  $j_1 >2$. The sum \ref{eq:sumfortwo} can be written
\begin{align*}
&\frac{1} {n^{m-1}}  \sum_{i \in N^m}   Q_{i_1} Q_{i_2} h_c(Z_{i_{j_1}},...,Z_{i_{j_c}})	 \overset{L.\ref{lem:summingLema}}{=\joinrel=}  
\frac{1} {n^{c+1}}  \sum_{i \in N^{c+2}}   Q_{i_1} Q_{i_2} h_c(Z_{i_3},...,Z_{i_{c+2}})  \\
=& \left( \frac{1}{n^{c-1}}   \sum_{ i \in N^c} h_c(Z_{i_1},...,Z_{i_c}) \right) \left( \frac{1}{n} \sum_{i=1}^{n}Q_i \right)^2  = \frac{n}{l_n}V_n(h_c)  \left( \frac{l_n}{n} \sum_{i=1}^{n}Q_i \right)^2 .  
\end{align*}
By   Lemma \ref{lem:higherVstats}, for $c \geq 3$, $\frac{n}{l_n} V_n(h_{c})$  converges to zero in mean squared. Indeed, it is sufficient to put $G_i=1$ and $T_n = n V_n(h_{c})$ and notice that $\frac{n}{l_n} V_n(h_{c}) = \frac{1}{l_n} = o(1)T_n$, since $l_n \to \infty$. Consequently, since   $(\frac{1}{n} \sum_{i=1}^{n}Q_i)^2$ converges to zero in mean square  \ref{lem:meanWi}, the product, converges to zero in mean square i.e.
\[
 V_n(h_c)  \left( \frac{1}{n} \sum_{i=1}^{n}Q_i \right)^2  \overset{L_2}{\to} 0
\]

Suppose  $j_1 = 2$. The sum  \ref{eq:sumfortwo} can be written
\begin{align}
 \label{eq:h2eq1}
 \begin{split}
&\frac{1} {n^{m-1}}  \sum_{i \in N^m}  Q_{i_1} Q_{i_2} h_c(Z_{i_2},...,Z_{i_{j_c}})  
\overset{L.\ref{lem:summingLema}}{=\joinrel=} \frac{1} {n^c}  \sum_{i \in N^{c+1}}   Q_{i_1} Q_{i_2} h_c(Z_{i_2},\cdots,Z_{i_{j_c}}) = \\
& \left( \frac{1}{l_n n^{c-1} } \sum_{i \in N^c} Q_{i_1}  h_c(Z_{i_1}, \cdots ,Z_{i_c}) \right) \left( \frac {l_n}{n} \sum_{i=1}^{n}Q_i \right).
\end{split}
\end{align}
The latter expression $\frac {l_n}{n} \sum_{i=1}^{n}Q_i$ converges to zero in mean square. The former expression can be further decomposed
\begin{align*}
& \frac{1}{ l_n} n^{-c+1} \sum_{i \in N^c} Q_{i_1}  h_c(Z_{i_1}, \cdots ,Z_{i_c}) = \frac 1 4 \frac{1}{l_n} (T_{+}-T_{-}) \text{ where,} \\   
\frac{1}{l_n} T_{-} &=   \frac{1}{l_n} n^{-c+1} \sum_{i \in N^2} (Q_{i_1}-1)h_c(Z_{i_1}, \cdots ,Z_{i_c})(Q_{i_2}-1), \\
\frac{1}{l_n} T_{+} &= \frac{1}{l_n} n^{-c+1} \sum_{i \in N^2}  (Q_{i_1}+1)h_c(Z_{i_1}, \cdots ,Z_{i_c})(Q_{i_2}+1),
\end{align*}
We  use Lemma \ref{lem:higherVstats} for $\frac{1}{l_n} T_{+}$ and $\frac{1}{l_n} T_{-}$, to show that they converge to zero. We  need to check that 
\[
 \sup_{i } E ( Q_{i}+/-1)^4   <\infty
\]
If $Q_i = W_i$ this follows from the Bootstrap assumptions $\sup_{n} \sup_{i \leq n} \ev W_{i,n}^{4} < \infty$. If $Q_i = \tilde W_i$ we check that  
\[
 E (\frac 1 n \sum_{i=1}^n W_i)^4  \leq \sup_{n} \sup_{i \leq n} \ev W_{i,n}^{4},
\]
and so $\leq  \sup_{i} \ev (\tilde W_i)  < \infty$. Now we conclude that both $\frac{1}{l_n} T_{+}$ and $\frac{1}{l_n} T_{-}$ converge to zero. Therefore their sum (even though they are not independent) converges to zero. 

Suppose $j_1=1$ and $j_2>2$. This case is identical to the previous case, up to swapping $i_1,i_2$ in the equation \ref{eq:h2eq1}.  

Finally, suppose $j_1=1$ and $j_2=2$ and $c>2$. The sum  \ref{eq:sumfortwo} can be written
\begin{align*}
\frac{1} {n^{m-1}}  \sum_{i \in N^m}  Q_{i_1} Q_{i_2} h_c(Z_{i_1},Z_{i_2},...,Z_{i_{j_c}})  \overset{L.\ref{lem:summingLema}}{=\joinrel=} \frac{1} {n^c}  \sum_{i \in N^{c+1}}   Q_{i_1} Q_{i_2}  h_c(Z_{i_1},Z_{i_2},...,Z_{i_{j_c}})
\end{align*}
We  again use Lemma \ref{lem:higherVstats} to see that this sum converges to zero in mean squared (we checked the assumptions above). We have proved that 
\begin{align*}
  \lim_{n \to \infty} \left( n B(h) -  \binom m 2  n B(h_2)  \right) \overset{L_2}{=} 0
\end{align*}
\end{proof}
So far we  avoided expressing results in terms of $\tau$-mixing and degeneracy of a core, now we relate $\varDelta$ formalism to those concepts. We start with a technical lemma. 
\begin{lemma}
\label{stm:LipAndBound}
 If $h$ is a  Lipschitz continuous core then its components are also Lipschitz continuous.
\end{lemma}
\begin{proof}
The auxiliary function  used in the Hoeffding decomposition
\begin{align*}
g_c(z_1,...z_c) = \ev h(z_1,...,z_c,Z_{c+1}^*,...,Z_{m}^*).  
\end{align*}
is Lipschitz, since $h$ is Lipschitz continuous.
\begin{align*}
|&g_c(z_1,...z_c) - g_c(z_1',...z_c')| \\
 &\leq \left| \int    [h(z_1,...,z_c,z_{c+1},...,z_{m}) - h(z_1',...,z_c',z_{c+1},...,z_{m}) ] dP(z_{c+1}) \cdots dP(z_m)\right| \\
 &\leq \left| \int    Lip(h) \left(  \sum_{i=1}^c | z_i - z_i'| + \sum_{i=c+1}^m | z_i - z_i|  \right)  dP(z_{c+1})  \cdots dP(z_m) \right| \\
  &\leq \left| \int    Lip(h) \left(  \sum_{i=1}^c | z_i - z_i'|   \right)   dP(z_{c+1})  \cdots dP(z_m) \right| \\
& = | Lip(h)   \sum_{i=1}^c | z_i - z_i'| \int  dP(z_{c+1})   \cdots dP(z_m)  |  \\
& = | Lip(h)   \sum_{i=1}^c | z_i - z_i'|  |.  \\
\end{align*}
$h_0$ is obviously Lipschitz continuous. If $h_{k}$ for $k<c$ are Lipschitz continuous then, since $g_c$ is Lipschitz continuous, $h_c$ is also Lipschitz continuous as a sum of Lipschitz continuous functions.
\end{proof}
\begin{lemma}
\label{lem:disentangle}
Let $\left\{ Z_{t}\right\} $ be a $\tau$-dependent stationary process and $h$ be a Lipschitz  core of $m$ arguments, If   for all $c>0$
$(h_c \times h_c,Z_t)$ and $(h,Z_t)$  are of type $\varDelta$ with the rate  $O(\tau(d))$ then 
\[
 \varDelta(h,d) =\varDelta(h_c \times h_c,d)  = O(\tau(d))
\]
\end{lemma}

\begin{proof}
Let $f = h_c \times h_c$ or $f=h$. $f$ is canonical and Lipschitz continuous (if $f = h_c \times h_c$ it follows from  Lemma \ref{stm:LipAndBound}).   Suppose $i_r$ is  the isolating index. Further suppose there are $k$  indexes $a_1,\cdots ,a_k$ smaller than $i_r$ and $m-k-1$ indexes greater than $i_r$, namely $a_{k+2}, \cdots , a_m$. In this  notation $a_{k+1}=i_r$.   

Let us partition the vector $\left(Z_{i_{1}},\ldots,Z_{i_{m}}\right)$ into three parts:
\begin{equation*}
A =  \left(Z_{a_{1}},\ldots,Z_{a_{k}}\right),\; B=Z_{a_{k+1}},\; C=\left(Z_{a_{k+2}},\ldots,Z_{a_{m}}\right).
\end{equation*}
where $a_{k+1}$ is the isolating index. If $k=0$, $A$ is empty and if $k=m-1$, $C$ is empty but this does not change our arguments below. Using
Lemma \cite[Lemma 5.3]{dedecker2007weak}, we will construct $B^{*}$ and $C^{**}$ that are independent
of $A$ and independent of each other and 
\begin{equation}
\ev \left\Vert \left(A,B,C\right)-\left(A,B^{*},C^{**}\right)\right\Vert _{1} =  O(\tau\left(w\right)), \label{eq: ABCLemma_property1}
\end{equation}
where $w$ is an isolating distance \footnote{ \cite[Lemma 5.3]{dedecker2007weak}   assumes that there exists a random variable $\delta$ independent of the vector $(A,B,C)$. This assumption is important only if CDF of the vector is not continuous, we can assume that our space is endowed with such $\delta$.}.  Let $D=(B,C)$ The \cite[Lemma 5.3]{dedecker2007weak}  guarantees that there exist $D^*$ independent of $A$, such that 
\begin{align*}
 \|& \ev d(D,D^*) | \sigma(A) \|_1 = \ev |  \ev d(D,D^*) | \sigma(A) | \\
 &= \ev   (\ev d(D,D^*) | \sigma(A)) = \ev d(D,D^*) = O(\tau(w)),
\end{align*}
where $d$ is the $L_1$ distance on Euclidean space (non-negativity justifies dropping absolute value). By definition of  $\tau$-mixing, $\tau(w) \geq \tau(\sigma(A),D )$. Since  $D^*=(B^*,C^*)$ has the same distribution as $D$ (in particular it has the same $\tau$ dependence structure) we use the lemma again to construct $C^{**}$, independent of $A$ and $B^*$, such that 
\[
  \ev d(C,C^{**}) =  O(\tau(w)).
\]
By the triangle inequality we obtain equation \ref{eq: ABCLemma_property1}. 
\begin{align*}
&\ev  d \big( (A,B,C) - (A,D^*) + (A,D^*) - (A,B^*,C^{**}) \big) \leq \\
&\ev d \big( (A,B,C) - (A,D^*)\big) + \ev d\big((A,D^*) - (A,B^*,C^{**}) \big) =\\
&\ev d(D,D^*) + \ev d(C,C^{**}) = O(\tau(w)).
\end{align*}
Since $B^{*}$ is a singleton, independent of both $A$ and $C^{**}$, by degeneracy of $f$  
\begin{equation}
\ev f(A,B^{*},C^{**})=0.\label{eq: ABCLemma_property2}
\end{equation}
Note that $ f(A,B^{*},C^{**})$ is just a shorthand, random variables $A,B^{*},C^{**}$ are  inserted in the right order. Thus, we have that
\begin{align*}
\left|\ev f\left(Z_{i_{1}},\ldots,Z_{i_{m}}\right)\right| & \leq  \ev \left|f\left(A,B,C\right)-f\left(A,B^{*},C^{**}\right)\right|+\left|\ev f(A,B^{*},C^{**})\right|\\
 & \leq  \text{Lip}(f)\ev \left\Vert \left(A,B,C\right)-\left(A,B^{*},C^{**}\right)\right\Vert _{1}+0\\
&= O(\tau(w)).
\end{align*}
\end{proof}

Finally we can prove   Theorem \ref{th:mainOne}. 

% \begin{Theorem}
% \label{th:mainOne}
% Assume that the stationary process $Z_t$ is $\tau$-dependent with $\tau(r) = O(r^{-4})$. If the core $h$ is a Lipschitz continuous, one-degenerate,  function of $m$ arguments, such that $\ev h(Z_0,\cdot,Z_0) < \infty$ and its $h_2$-component is a positive definite kernel, then 
% \[
%  \sup_{x \in R} \left| P(B_n  < x|Z_1, \cdots , Z_n ) -  \lim_{n \to \infty} P(n V_n(n<x) \right| \to 0 
% \]
% in probability.
% \end{Theorem}

\begin{proof}
In the proof we are going to use \cite{leucht_dependent_2013}[Theorems 2.1, 3.1], which characterise asymptotic properties of $nV_n(h_2)$ and $n B(h_2)$. Both  theorems use similar set of assumptions which we verify upfront.  \\
\textit{Assumption A2.}\begin{itemize}
 \item (i)  $h_2$ is one-degenerate and symmetric - this follows from the Hoeffding decomposition;
 \item (ii) $h_2$ is a kernel - is one of the assumptions of this theorem;
 \item (iii) $\ev h_2(Z_1,Z_1) < \infty$ -- follows from $\sup_{i \in N^6}|\ev h(Z_i) |<\infty$ ;
 \item  (iv) $h_2$ is Lipschitz continuous - follows from the Lemma \ref{stm:LipAndBound}.
\end{itemize}
\textit{Assumption B1, A1.} Assumption $B1$, $\sum_{r=1}^n r^2 \sqrt{\tau(r)} < \infty$, is the same as ours, assumption $A1$, $\sum_{r=1}^n  \sqrt{\tau(r)} < \infty$ is implied.\\
\textit{Assumption B2.} This assumption about the bootstrap process $W_t$ is the same as our Bootstrap assumptions.

Denote by $V$ the weak limit of $n V_n(h_2)$, which exits by the  \cite{leucht_dependent_2013}[Theorem 2.1],  and let $\mathcal{F} = \sigma(Z_1, \cdots , Z_n)$.   By \cite[Theorem 3.1]{leucht_dependent_2013}, since the distribution of  $V$ is continuous, we have 
\begin{align*}
\sup_{x \in R} &\left| P(nB_n(h_2)  < x|\mathcal{F}) -   P(V<x) \right| \to 0  \\
\end{align*}
in probability. We show that $nB_n(h_2)$ converges  to $V$ weakly, by showing  pointwise convergence  of CDF  
\begin{align*}
 \lim_{n \to \infty} &P(nB_n(h_2)  < x) =  \lim_{n \to \infty} \ev P(nB_n(h_2)  < x|\mathcal{F}) \\
 &=  \ev  \lim_{n \to \infty} P(nB_n(h_2)  < x|\mathcal{F})  = \ev P(V<x) =P(V<x) 
\end{align*}
To change the order of limit and expectation we have dominated convergence Theorem, justified since  $P(nB_n(h)  < x|\mathcal{F})$  are bounded by 1.
%TODO mybe write more
The difference $n(B_n(h) - V_n(h))$ is
\begin{align*}
 n \left (B_n(h) -  \binom m 2 B_n(h_2) \right) + \binom m 2 \left (n B_n(h_2) -V\right)+ \left (\binom m 2 V - nV_n(h)\right)
 \end{align*}
By  Lemma \ref{lem:equivBoot} and Lemma \ref{lem:equivVanila} respectively, both 
$$n (B(h) -   \binom m 2  B(h_2)) , n (V_n(h) - n \binom m 2 V_n(h_2))$$
converge to zero in mean square. We check assumptions: since $Z_t$ is tau mixing and $h$ is Lipschitz continuous, by Lemma \ref{lem:disentangle} all self products of components and $Z_t$, $(h_c \times h_c,Z_t)$ for $c>0$, are $\varDelta$ type of order $\tau(r)$, of order at least  $o(r^{-4})$ (since $\sum_{r=1}^n r^2 \sqrt{\tau(r)} < \infty$). Since $h$ is one degenerate,  first and zero component $h_0,h_1$ are equal to zero (and so are $B(h_0),B(h_1)$).  

This shows that $nB_n(h_2)$ converges weakly to $V$. 
\end{proof}  

\subsection{Proof of Theorem \ref{th:mainThree}}

\begin{proof}
Using  Hoeffding decomposition we  write the core  $h$ as a sum of the components $h_c$ ,
\begin{align*}
  n V_n(h) =& n V_n(h_m) + \binom m 1 n V_n(h_{m-1}) + ... \\ 
  &+ \binom {m} {m-2} n V_n(h_{2}) + \binom {m} {m-1} n V_n(h_{1})+h_0.
\end{align*}
By the  Lemma \ref{lem:higherVstats}, for $c \geq 1$, $V_n(h_{c})$  converges to zero in probability. The sum associated with $h_1$ is
\[
V_n(h_1) = \frac 1 n \sum_{i=1}^{N} h_1(Z_i).
\]
By Lemma \ref{lem:disentangle}  $(h_1 \times h_1,Z_t)$ is $\varDelta$ type of order  $o(r^{-4})$. Using Lemma \ref{lem:higherVstats} we get the growth rate of 
$ \ev (V_n(h_1))^2= O(\frac 1 n)$, thus $V_n(h_1)$ converges in mean square to zero.
\end{proof}

\subsection{Proof of Theorem \ref{th:mainTwo}}
\label{sec:prMainTwo}

% \begin{Theorem}
% \label{th:mainTwo}
% Assume that the process $\{Z_t\}$ is $\tau$-dependent with a coefficient $\tau(r) = o(r^{-4})$. If the core $h$ is  a function of $m$ arguments,  such that 
% \[
%   \sup_{n} \sup_{i \in N^m}\ev h(Z_{i_1},...,Z_{i_m})  h(Z_{i_{m+1}},...,Z_{i_2m})
% \]
% then $B(h)$  converges to zero in probability. 
% \end{Theorem}

\begin{proof}
 We  show that the second non central moment of $B_1$ converges to $0$. The second non central moment is 
\begin{align*}
 \ev B_1 &= \ev \frac {1} {n^{2m}} \sum_{i \in N^{2m}}  W_{i_1} W_{i_2}  W_{i_{m+1}} W_{i_{m+2}} \ev h(Z_{i_1},...,Z_{i_m})  h(Z_{i_{m+1}},...,Z_{i_{2m}})\\
              &=\frac {1} {n^{2m}} \sum_{i \in N^{2m}} \ev W_{i_1} W_{i_2}  W_{i_{m+1}} W_{i_{m+2}}  \ev h(\cdots) h(\cdots)   \\
               & \leq  C \ev  \frac {1}  {n^4} \sum_{i \in N^4}  |\ev W_{i_1} W_{i_2}  W_{i_{m+1}} W_{i_{m+2}}|\\
              & = C \ev  \left( \frac {1}  {n}  \sum_{i=1}^n W_i \right)^4.
\end{align*}
The inequality in the third line follows from the fact that correlations of the bootstrap process $W_i$ are positive (Bootstrap assumption) and 
$$C = \sup_{n} \sup_{i \in N^m}\ev h(Z_{i_1},...,Z_{i_m})  h(Z_{i_{m+1}},...,Z_{i_2m}),$$ 
is finite. By Lemma \ref{lem:meanWi} 
\[
 \frac {1}  {n}  \sum_{i=1}^n W_i  \to 0,
\]
and therefore  $\ev C \left( \frac {1}  {n}  \sum_{i=1}^n W_i \right)^4 \to 0$.

We now prove that $o(n) B_2(h)$ converges to zero. Using Hoeffding decomposition  we write core  $h$ as a sum of components $h_c$ and $h_0$  
\begin{align}
\label{eq:bootstrapedOne}
 &n B_2(h) = \frac{1} {n^{m-1}}  \sum_{i \in N^m}  \Big[h_0  \tilde W_{i_1} \tilde W_{i_2} + \sum_{1 \leq j \leq m } \tilde W_{i_1} \tilde W_{i_2} h_1(Z_{i_j})    \\ 
 &\sum_{1 \leq j_1 < j_2 \leq m } \tilde W_{i_1} \tilde W_{i_2} h_2(Z_{i_{j_1}},Z_{i_{j_2}}) + ... +  \tilde W_{i_1} \tilde W_{i_2}   h_m(Z_{i_1},...,Z_{i_m}) \Big].
\end{align}
We examine terms of the above sum starting form the one with $h_0$ - it is equal to zero
\begin{align*}
\frac{1} {n^{m-1}}  \sum_{i \in N^m}  h_0  \tilde W_{i_1} \tilde W_{i_2}   \overset{L.\ref{lem:summingLema}}{=\joinrel=} \frac 1 n h_0 \sum_{i \in N^2} \tilde W_{i_1} \tilde W_{i_2} = \frac 1 n h_0 \left( \sum_{i=1} \tilde W_i \right)^2  \overset{L.\ref{stmt:obviousD}}{=\joinrel=} 0.
\end{align*}  
Term with $h_1$ is zero as well, to see that fix $j$ and consider 
\begin{align*}
T_{j} = \frac{1} {n^{m-1}}  \sum_{i \in N^m}  \tilde W_{i_1} \tilde W_{i_2} h_1(Z_{i_j}).  
\end{align*}  
If $j=1$ then
\begin{align*}
T_{1} \overset{L.\ref{lem:summingLema}}{=\joinrel=} \frac{1} {n}  \sum_{i \in N^2}  \tilde W_{i_1} \tilde W_{i_2} h_1(Z_{i_1}) =  \frac{1} {n}  \left( \sum_{i=1}^n  \tilde W_i h_1(Z_i) \right) \left( \sum_{i=1} \tilde W_i \right) \overset{L.\ref{stmt:obviousD}}{=\joinrel=} 0.
\end{align*}
If $j=2$ the same reasoning holds and if $j>2$
\begin{align*}
T_{j} \overset{L.\ref{lem:summingLema}}{=\joinrel=} \frac{1} {n^2}  \sum_{i \in N^3}  \tilde W_{i_1} \tilde W_{i_2} h_1(Z_{i_3}) =  \frac{1} {n}  \left( \sum_{i=1}^n h_1(Z_i) \right) \left( \sum_{i=1} \tilde W_i \right)^2 \overset{L.\ref{stmt:obviousD}}{=\joinrel=} 0.
\end{align*}
By Lemma \ref{lem:equivBoot}, since $B(h_0)=B(h_1)=0$, $(nB(h) - \binom m 2 nB(h_2)) \to 0$ in mean square and the only term that remains is 
\begin{align*}
T_n = \frac{1} {n}  \sum_{i,j \in N}  \tilde W_{i} \tilde W_j h_2(Z_i,Z_j)
\end{align*}
 Now we can use the Lemma \ref{lem:higherVstats} to show that $o(1) T_n$ converges to zero. 
 \end{proof}

\subsection{Proof of Proposition \ref{prop:mmd} }
  \label{sub:prop:mmd}
\begin{proposition*}
 Let $k$ be bounded and Lipschitz continuous, and let $\left\{ X_t \right\}$ and $\left\{ Y_t \right\}$ 
 both be $\tau$-dependent with coefficients $\tau(i) =  O(i^{-6-\epsilon})$, but independent of each other. Further, let $n_x=\rho_x n$ and $n_y=\rho_y n$ where $n=n_x+n_y$. Then, under the null hypothesis $P_x=P_y$, $\varphi\left(\rho_x \rho_y n\widehat{\text{MMD}}_k, \rho_x \rho_y n\widehat{\text{MMD}}_{k,b}\right)\to 0$ in probability as $n\to\infty$, where $\varphi$ is the Prokhorov metric.
\end{proposition*}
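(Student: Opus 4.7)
My plan is to reduce the two-sample statistic to a spectral sum in the style of \cite[Theorem 3.1]{leucht_dependent_2013}, handling the unequal sample sizes by keeping careful track of the scalings $\rho_x,\rho_y$.

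First I would rewrite both statistics in terms of a centered kernel. Under the null, set $P_x=P_y=P$ and define $f(x)=\int k(x,z)\,dP(z)$, $c=\int f(x)\,dP(x)$, and
\[
\bar k(x,y)=k(x,y)-f(x)-f(y)+c.
\]
A direct expansion shows that the linear-in-$f$ and constant contributions in $\widehat{\text{MMD}}_k$ exactly cancel between the within-sample and cross-sample blocks, so
\[
\widehat{\text{MMD}}_k=\tfrac{1}{n_x^2}\sum \bar k(x_i,x_j)+\tfrac{1}{n_y^2}\sum \bar k(y_i,y_j)-\tfrac{2}{n_x n_y}\sum \bar k(x_i,y_j).
\]
The analogous identity for $\widehat{\text{MMD}}_{k,b}$ follows from the fact that $\sum_i \tilde W_i^{(x)}=\sum_j \tilde W_j^{(y)}=0$, which kills every additive $f$- and $c$-term in the bootstrap sums. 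This reduces the problem to a canonical kernel $\bar k$, which is a positive definite, Lipschitz, bounded, one-degenerate kernel under the null.

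Next I would use the Mercer-type expansion $\bar k(x,y)=\sum_l \lambda_l \phi_l(x)\phi_l(y)$ with $\mathbb{E}\phi_l(X)=0$. Substituting and collecting terms gives the crucial representation
\[
\rho_x\rho_y n\,\widehat{\text{MMD}}_k=\sum_l \lambda_l\bigl(\sqrt{\rho_y}A_l-\sqrt{\rho_x}B_l\bigr)^2,\qquad
\rho_x\rho_y n\,\widehat{\text{MMD}}_{k,b}=\sum_l \lambda_l\bigl(\sqrt{\rho_y}A_l^*-\sqrt{\rho_x}B_l^*\bigr)^2,
\]
where $A_l=n_x^{-1/2}\sum_i\phi_l(x_i)$, $B_l=n_y^{-1/2}\sum_j\phi_l(y_j)$ and the starred versions insert $\tilde W_i^{(x)}$, $\tilde W_j^{(y)}$ respectively. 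Since $\{X_t\}$ and $\{Y_t\}$ are independent, as are the two bootstrap processes, the pairs $(A_l,B_l)$ and $(A_l^*,B_l^*)$ factor into independent coordinates.

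I would then invoke the $\tau$-mixing CLT (as used in \cite{leucht_dependent_2013}) for each finite-dimensional vector $(A_1,\ldots,A_L)$ to obtain convergence to a centered Gaussian with covariance $\Sigma^{(x)}_{lm}=\sum_{t\in\mathbb{Z}}\mathrm{Cov}(\phi_l(X_0),\phi_m(X_t))$, and analogously for $B$. The wild-bootstrap conditional CLT of \cite{leucht_dependent_2013}, applied separately to $\{X_t\}$ with weights $\tilde W^{(x)}$ and to $\{Y_t\}$ with $\tilde W^{(y)}$, yields the same Gaussian limits in probability for $(A_l^*)$ and $(B_l^*)$, with $A^*$ and $B^*$ independent because the two bootstrap processes are. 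Continuous-mapping then matches the finite-dimensional distributions of the two quadratic forms in probability.

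Finally I would pass from the finite truncation $\sum_{l\le L}$ to the full sum. The main obstacle is precisely this tightness/truncation argument: one must bound $\mathbb{E}\bigl[\sum_{l>L}\lambda_l(\sqrt{\rho_y}A_l-\sqrt{\rho_x}B_l)^2\bigr]$ and the corresponding bootstrap expression uniformly in $n$, which requires controlling $\sum_l\lambda_l\mathrm{Var}_{\tau\text{-mix}}(A_l)$ and its bootstrap analogue via the $\tau$-mixing summability $\tau(r)=O(r^{-6-\epsilon})$ and the boundedness/Lipschitz property of $\bar k$. This is essentially the same tail estimate carried out in \cite[proof of Theorem 3.1]{leucht_dependent_2013}, extended here to a sum of three quadratic forms, with the cross term handled by independence of the two $\tau$-mixing streams and the two bootstrap processes. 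Combining the finite-dimensional convergence with this uniform tail bound gives convergence in the Prokhorov metric in probability, as required.
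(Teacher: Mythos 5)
Your proposal follows essentially the same route as the paper's proof: reduce to the centred (degenerate) kernel, observe that the empirical centring $\sum_i \tilde W_i^{(x)}=\sum_j \tilde W_j^{(y)}=0$ makes the bootstrapped statistic invariant under this centring, expand in the Mercer basis with the $\sqrt{\rho_x},\sqrt{\rho_y}$ scalings, and match the limits of the two within-sample blocks and the cross block using the independence of the two streams and of the two bootstrap processes. The only difference is one of packaging: where you re-derive the finite-dimensional Gaussian limits and the truncation/tightness bound by hand, the paper simply invokes \cite[Theorem 3.1]{leucht_dependent_2013} for the diagonal blocks and its part (b) together with the continuous mapping theorem for the cross term, which is sound since those statements already contain the tail estimates you outline.
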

  \begin{proof}
  Since $\widehat{\text{MMD}}_k$ is just the MMD between empirical measures
using kernel $k$, it must be the same as the empirical MMD $\widehat{\text{MMD}}_{\tilde k}$ with centred kernel $\tilde{k}(x,x')=\left \langle k(\cdot,x)-\ev k(\cdot,X), k(\cdot,x')-\ev k(\cdot,X) \right \rangle_{\Hk}$ according to \cite[Theorem 22]{SejSriGreFuk13}. Using the Mercer expansion, we can write
\begin{align*}
\rho_x \rho_y n\widehat{\text{MMD}}_k & = \rho_{x}\rho_{y}n\sum_{r=1}^{\infty}\lambda_{r}\left(\frac{1}{n_{x}}\sum_{i=1}^{n_{x}}\Phi_{r}(x_{i})-\frac{1}{n_{y}}\sum_{j=1}^{n_{y}}\Phi_{r}(y_{j})\right)^{2}\\
 & = \sum_{r=1}^{\infty}\lambda_{r}\left(\sqrt{\frac{\rho_{y}}{n_{x}}}\sum_{i=1}^{n_{x}}\Phi_{r}(x_{i})-\sqrt{\frac{\rho_{x}}{n_{y}}}\sum_{j=1}^{n_{y}}\Phi_{r}(y_{j})\right)^{2},
\end{align*}
where $\{\lambda_r\}$ and $\{\Phi_r\}$ are the eigenvalues and the eigenfunctions of the integral operator $f\mapsto \int f(x)\tilde k(\cdot,x)dP_x(x)$ on $L_2(P_x)$. Similarly as in \cite[Theorem 2.1]{leucht_dependent_2013}, the above converges in distribution to $\sum_{r=1}^\infty \lambda_r Z_r^2$, where $\{Z_r\}$ are marginally standard normal, jointly normal and given by $Z_r=\sqrt{\rho_x}A_r-\sqrt{\rho_y}B_r$. $\{A_r\}$ and $\{B_r\}$ are in turn also marginally standard normal and jointly normal, with a dependence structure induced by that of $\{X_t\}$ and $\{Y_t\}$ respectively. This suggests individually bootstrapping each of the terms $\Phi_{r}(x_{i})$ and $\Phi_{r}(y_{j})$, giving rise to 
\begin{align*}
\widehat{\text{MMD}}_{\tilde k, b}&=\sum_{r=1}^{\infty}\lambda_{r}\left(\frac{1}{n_{x}}\sum_{i=1}^{n_{x}}\Phi_{r}(x_{i})\tilde W_i^{(x)}-\frac{1}{n_{y}}\sum_{j=1}^{n_{y}}\Phi_{r}(y_{j})\tilde W_j^{(y)}\right)^{2}\\
{}&=\quad\frac{1}{n_x^2}\sum_{i=1}^{n_x}\sum_{j=1}^{n_x}\tilde W_i^{(x)}\tilde W_j^{(x)}\tilde k(x_i,x_j)-\frac{1}{n_x^2}\sum_{i=1}^{n_y}\sum_{j=1}^{n_y}\tilde W_i^{(y)}\tilde W_j^{(y)}\tilde k(y_i,y_j)\\
{}&\qquad-\frac{2}{n_x n_y}\sum_{i=1}^{n_x}\sum_{j=1}^{n_y}\tilde W_i^{(x)}\tilde W_j^{(y)}\tilde k(x_i,y_j). 
\end{align*}
Now, since $\tilde k$ is degenerate under the null distribution, the first two terms (after appropriate normalization) converge in distribution to $\rho_x\sum_{r=1}^\infty \lambda_r A_r^2$ and  $\rho_y\sum_{r=1}^\infty \lambda_r B_r^2$ by \cite[Theorem 3.1]{leucht_dependent_2013} as required. 
The last term follows the same reasoning - it suffices to check part (b) of \cite[Theorem 3.1]{leucht_dependent_2013} (which is trivial as processes $\left\{ X_t \right\}$ and $\left\{ Y_t \right\}$ are assumed to be independent of each other) and apply the continuous mapping theorem to obtain convergence to $-2\sqrt{\rho_x\rho_y}\sum_{r=1}^\infty \lambda_r A_rB_r$ implying that $\widehat{\text{MMD}}_{\tilde k, b}$ has the same limiting distribution as $\widehat{\text{MMD}}_{k}$.
While we cannot compute $\tilde k$ as it depends on the underlying probability measure $P_x$, it is readily checked that due to the empirical centering of processes $\{\tilde W_t^{(x)}\}$ and $\{\tilde W_t^{(y)}\}$, $\widehat{\text{MMD}}_{\tilde k, b}=\widehat{\text{MMD}}_{k, b}$ holds and the claim follows. Note that the result fails to be valid for wild bootstrap processes that are not empirically centred.
\end{proof}

\section{Various comments}

\subsection{Time complexity}
The original HSIC and MMD tests for i.i.d. data, the computational cost of the wild bootstrap approach scales quadratically in the number of samples, and linearly in the number of bootstrap iterations (in the i.i.d. case, these were permutations of the data). The main alternative approaches are the lagged bootstrap of \cite{chwialkowski2014kernel}, which has the same scaling with data and number of bootstraps, and the spectrogram approach of \cite{besserve_statistical_2013} (note, however, that both these alternative approaches apply only to the independence testing case). The cost of \cite{besserve_statistical_2013} is comparable to our approach, however the statistical power of \cite{besserve_statistical_2013} was much weaker on the data we examined.

%\bibliographystyle{plain}
%\bibliography{nips14Bib}
\end{document}